\def\eqref#1{equation~\ref{#1}}
\def\1{\bm{1}}
\DeclareMathAlphabet{\mathsfit}{\encodingdefault}{\sfdefault}{m}{sl}
\SetMathAlphabet{\mathsfit}{bold}{\encodingdefault}{\sfdefault}{bx}{n}
\newtheorem{theorem}{Theorem}[section]
\newtheorem{lemma}{Lemma}
\newtheorem{proposition}{Proposition}[section]
\newcommand{\rev}[1]{\textcolor{black}{#1}}
\title{Mapping Semantic \& Syntactic Relationships with Geometric Rotation}
\author{Michael Freenor \& Lauren Alvarez   \\
TELUS Digital\\
\texttt{\{michael.freenor,lauren.alvarez\}@telusdigital.com}
}
\begin{document}

\maketitle

\begin{abstract}
Understanding how language and embedding models  encode semantic relationships is fundamental to model interpretability. 
While early word embeddings exhibited intuitive vector arithmetic (``king'' - ``man'' + ``woman'' = ``queen''), modern high-dimensional text representations lack straightforward interpretable geometric properties. 
We introduce Rotor-Invariant Shift Estimation (RISE), a geometric approach that represents semantic-syntactic transformations as consistent rotational operations in embedding space, leveraging the manifold structure of modern language representations. 
RISE operations have the ability to operate across both languages and models without reducing performance, suggesting the existence of analogous cross-lingual geometric structure.
We compare and evaluate RISE using two baseline methods, three embedding models, three datasets, and seven morphologically diverse languages in five major language groups. 
Our results demonstrate that RISE consistently maps discourse-level semantic-syntactic transformations with distinct grammatical features (e.g., negation and conditionality) across languages and models.
This work provides the first demonstration that discourse-level semantic-syntactic transformations correspond to consistent geometric operations in multilingual embedding spaces, empirically supporting the linear representation hypothesis at the sentence level.

\end{abstract}

\section{Introduction}
Understanding how contemporary language models encode and manipulate semantic knowledge has become a central challenge in deep learning interpretability. 
The ability to interpret (probe) and control (steer) these internal representations is fundamental to developing trustworthy, safe AI systems. 
In word2vec \citep{mikolov2013efficient} and similar models, semantic relationships could be captured with simple vector arithmetic in the embedding space (i.e. the famous ``king'' - ``man'' + ``woman'' = ``queen'' analogy).
This \rev{linear} transparency offered both interpretability and controllability, enabling researchers to navigate semantic space through intuitive mathematical operations.

However, this clarity has largely disappeared in modern transformer-based language models.
While large language models (LLMs) have achieved remarkable performance across diverse language tasks \citep{achiam2023gpt,touvron2023llama}, their internal workings remain largely opaque \citep{elhage2022toy,rogers2021primer}, limiting our ability to understand, predict, and control their behavior in critical applications.
Unlike the interpretable, \rev{linear}  directions found in static word embeddings, the geometry of modern text representations lacks the same straightforward correspondence to semantic operations. 
This opacity poses significant challenges for understanding how these models organize linguistic knowledge and limits our ability to \rev{interpret} their behavior in principled ways.

The central challenge lies in identifying which geometric operations correspond to meaningful semantic transformations in these complex representation spaces. 
Current approaches often rely on task-specific \textit{probes} \citep{rogers2021primer,hewitt2019structural,alain2017understanding} or \textit{steering vectors} \citep{zou2023representation, wang2023concept,turner2023activation,merullo2024language,trager2023linear}, but lack generalizable frameworks for systematically mapping semantic relationships to geometric structure. 
Without such principled methods, we cannot determine whether the geometric regularities that made static word embeddings interpretable persist in modern language or embedding models, albeit in more complex forms.

We address this gap by introducing Rotor-Invariant Shift Estimation (RISE), a geometric approach that represents \rev{semantic-syntactic} transformations as consistent rotational operations in embedding space, leveraging the manifold structure of modern language representations. RISE is a rotor-based alignment method that identifies cross-lingual and cross-model geometric transformations. \rev{Specifically, we demonstrate how RISE identifies  
three discourse-level semantic-syntactic changes (negation, conditionality, and politeness) across seven morphologically distinct languages and generalizes across three different embedding model architectures.
\textbf{The goal of this study is to develop a framework for identifying discourse-level semantic-syntactic changes that correspond to consistent geometric transformations, and determine how well these transformations can be cross-lingually mapped across model architectures.}} 
Our approach treats \rev{semantic-syntactic} transformations as rotations on the unit hypersphere, where sentence embeddings reside, enabling us to align different linguistic contexts into a common geometric framework. 
This paper presents evidence that certain \rev{semantic-syntactic} transformations exhibit generalizable geometric structure while others vary based on context-dependence, extending the linear representation hypothesis to cross-lingual discourse. 
We demonstrate this through empirical experiments across two baselines, three models, and seven languages -- revealing that negation, conditionality, and politeness transformations can be captured as consistent rotational operations \footnote{The link to our GitHub repository is \url{https://github.com/fuelix/RISE-steering}.}.

\section{Related Work}

\subsection{Linear Representation Hypothesis}

The linear representation hypothesis (LRH), or linear subspace hypothesis, has emerged as a promising theory for bridging the interpretability gap for embeddings \citep{mikolov2013linguistic,levy2014linguistic,bolukbasi2016man,ethayarajh2019contextual,parklinear,park2025}. 
The LRH posits that semantic concepts are encoded as linear structures within embedding spaces, meaning linear algebraic operations can be used for interpretation and control (e.g., ``king" - ``man" + ``woman" = ``queen'' presented  by \citet{mikolov2013linguistic}).
\citet{parklinear} formalized the LRH by unifying three distinct notions of linearity that had developed independently across the literature:

\begin{enumerate}
    \item word2vec-like embedding differences \citep{arora2016latent,mimno2017strange,ethayarajh2019towards,reif2019visualizing,li2020sentence,hewitt2019structural,chen2021probing,chang2022geometry,jiang2023uncovering,mitchell-lapata-2008-vector,baroni-zamparelli-2010-nouns}
    \item logistic probing \citep{alain2017understanding,kim2018interpretability,belinkov2022probing,li2022emergent,geva2022transformer,nanda2023emergent}
    \item steering vectors \citep{wang2023concept,turner2023activation,merullo2024language,trager2023linear}
\end{enumerate}

\citeauthor{parklinear}'s theoretical framework addresses a critical gap by synthesizing the first formalization of what ``linear representation" means \citep{parklinear}. 
However, while the LRH has been validated primarily within individual languages (monolingually), there remains a significant gap in understanding how \rev{semantic-syntactic} transformations generalize across linguistic contexts (cross-lingually). 
Most existing work examines static concept encodings \citep{park2025,parklinear} rather than dynamic \rev{semantic-syntactic} transformations that reflect real-world language use. 
\rev{Our work is the first to extend the LRH to multilingual contexts and embedding models. Although, the linear representations we consider are geodesic arcs and not Euclidean lines.} 

\subsection{Linear \& Geometric Representation Techniques}
The geometric foundations established by \citet{parklinear} are crucial for understanding when and why linear algebraic operations succeed in capturing semantic relationships. 
With traditional Euclidean geometry, it is hard to accept that arbitrary dot products or cosine similarities have semantic meaning. 
Moreover, \citet{parklinear} demonstrated that the choice of inner product fundamentally determines the interpretability of geometric operations, providing principled foundations for representation analysis.
Our work builds directly on recent advances in understanding linear representations in language models \citep{parklinear, li2023inference}. 
RISE implements a technique that respects semantic structure, similar to the geometric framework developed by \citet{parklinear}. 
\rev{While previous work focused primarily on categorical concepts and word-level transformations, RISE extends our understanding to sentence-level, discourse-level transformations through cross-lingual and cross-model analysis using seven morphologically diverse languages.}

\subsubsection{Steering Vectors \& Embedding Models}
The practical applications of linear representation theory have been explored through steering vector techniques. 
\citet{turner2023activation}, \citet{liu2023context}, and \citet{zou2023representation} demonstrated that targeted modifications to internal, latent space representations can systematically alter model behavior without parameter updates. 
The majority of steering vector research \citep{im2025unified,rimsky2023steering,zou2023representation,li2023inference} is connected to activation steering, only investigating the impact of steering vectors in the activation, hidden, and/or latent layer of an LLM.
Recently, \citet{pham2024householder} introduced Householder Pseudo-Rotation (HPR), which addresses activation norm consistency issues in LLM behavioral modification through direction-magnitude decomposition and pseudo-rotational transformations. 
\rev{Building on the insight that geometric approaches outperform additive methods, our work extends geometric reasoning to semantic transformations in embedding space through Riemannian operations.
To our knowledge, there is no work investigating the application of steering vectors to embedding models -- only completion models.
This study extends steering principles to embedding models on manifolds, not activation-level steering.}

\subsection{Generalization and Reliability Challenges}
Current knowledge about the generalization properties of linear representations reveals significant limitations. 
The taxonomy of generalization research in natural language processing (NLP) \citep{hupkes2023taxonomy} provides a framework for evaluating robustness, but systematic applications to representation-based techniques (i.e., steering, probing, or embedding manipulation) have been limited.
Recent empirical studies have revealed that steering vector effectiveness varies substantially across different inputs and contexts \citep{tan2024analysing}. 
Secondly, the relationship between local and global linearity represents a particularly critical gap in current understanding. 
There have been numerous demonstrations of local linear behavior within specific domains or prompt formats, but achieving global linearity (generalizable to multiple model architectures with different pre-training as required by strong versions of the LRH) remains challenging. 
While many studies demonstrate impressive results in controlled settings, they often fail to address the robustness needed in practical applications.
This study contributes to the literature gap by presenting a robust framework for geometrically identifying discourse-level \rev{semantic-syntactic} changes across typologically diverse languages and model architectures.

\section{Theoretical Motivation}
The limitations identified \rev{in the related literature} point toward a fundamental, theoretical challenge: existing approaches operate in Euclidean\rev{/linear} space while modern embeddings live on curved manifolds \rev{(spherical space)}. 
This geometric mismatch may explain why steering vector \rev{research shows} inconsistent cross-context performance and why linear methods struggle with robust generalization.
We \textbf{hypothesize} that discourse-level \rev{semantic-syntactic} transformations correspond to intrinsic geometric operations on the embedding manifold, rather than fixed directions derived from Euclidean computations. 
If semantic transformations can be characterized as consistent rotational operations on the unit hypersphere where embeddings reside, \rev{\textbf{this would provide theoretical support for the extension of the Linear Representation Hypothesis in curved spaces (through geodesics) and cross-lingual interpretability.}}
Testing this hypothesis requires robust evaluation across diverse languages and embedding architectures to determine whether geometric consistency reflects universal semantic properties or model-specific artifacts.

\section{Rotor-Invariant Shift Estimation (RISE)}
Modern sentence embeddings from multilingual encoders reside approximately on a unit hypersphere in high-dimensional space when the training objective enforces or fixes the $\ell_2$-norm constraints \citep{hirota2020emu}, the embeddings are normalized to unit length \citep{reimers2019sentencebert}, or the model is designed to produce isotropic embeddings \citep{li2020sentence,ethayarajh2019contextual}. 
Local semantic transformations (e.g., negation, politeness, conditionality) can be understood as rotational displacements on this sphere. 
The key insight is that these displacements can be interpreted by aligning different contexts to a common geometric frame.

For any neutral sentence embedding $n \in \mathbb{S}^{d-1}$ and its semantically 
transformed variant $v \in \mathbb{S}^{d-1}$, we can compute an orthogonal 
transformation (Clifford-algebraic rotor) $R(n)$ that aligns $n$ to a canonical reference direction $e_{1}$. 
By applying this same transformation to $v$, we express the semantic change 
in a standardized coordinate system:

\begin{equation}
\xi = R(n) \, \log_{n}(v),
\end{equation}

where $\log_{n}(v)$ denotes the Riemannian logarithm that computes the tangent 
vector from $n$ to $v$ on the hypersphere, and $R(n)$ aligns the tangent vector to the canonical reference direction.
Normalized embeddings reside on a unit hypersphere, where geodesics define the shortest paths between points, preserving the manifold's intrinsic geometry rather than imposing Euclidean distance measures. These geodesic paths represent the natural notion of a ``line'' in the embedding space, as they define the shortest distance between two points on the surface. By working with geodesics, we ensure our semantic transformations are consistent with the manifold structure.
To ``flatten'' out the curved arc to a straight vector, the Riemannian logarithmic map $\log_{n}(v)$ produces the vector from $n$ to $v$ on a tangent plane at $n$.
By operating within the tangent space at $n$, geodesic differences can be treated as ordinary vectors.

\subsection{The Rotor Alignment Algorithm}
RISE proceeds in three steps \rev{illustrated in Figure~\ref{fig:RISE}:}

\begin{figure}[h]
\centering
\includegraphics[width=0.65\linewidth ]{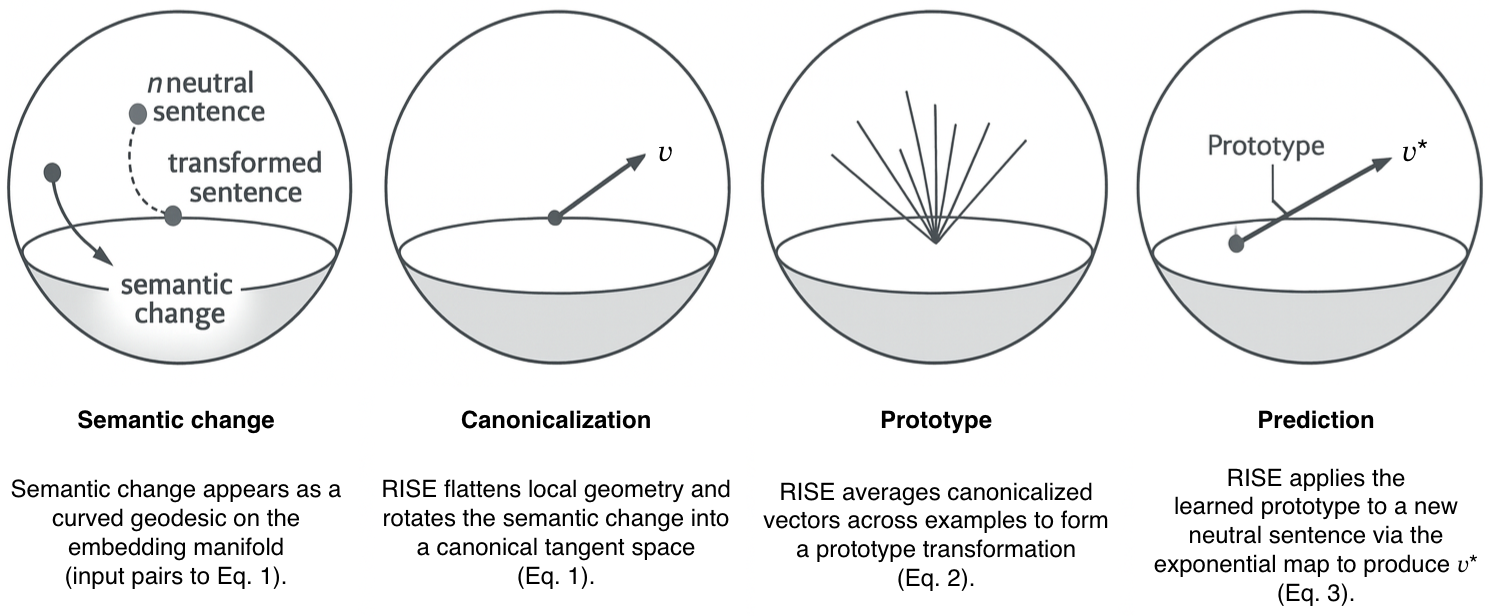}
\caption{\rev{RISE step-by-step illustration.}}
\label{fig:RISE}
\end{figure}

\textbf{Canonicalization.} For each neutral--transformed sentence pair $(n_i, v_i)$, 
compute a rotor $R(n_i)$ that maps $n_i$ to the reference direction $e_{1}$. 
We interpret canonicalization as controlling for the semantics present in the first elements of our pairs. 
By applying the canonical rotation to the second of the two the idea is that we have isolated the key differences between the elements in a fixed frame of reference.

\textbf{Prototype Learning.} Canonicalize all semantic changes into the reference frame 
and average all the tangent vectors to calculate one Prototype $\vec{p}$, where $M$ is the total amount of sentence pairs\footnote{For small angular differences, first-order equivalent to simply averaging the points and re-normalizing after the fact.}. This is a similar technique to mean-centering \citep{jorgensen2024improving}:
\begin{equation}
\vec{p} = \frac{1}{M} \sum_{i=1}^{M} R(n_i) \, \log_{n_i}(v_i).
\end{equation}

\textbf{Prediction.} To predict the semantic transformation for an unseen neutral embedding $n^{\ast}$, the prototype $\vec{p}$ can be used to predict the  
transformation embedding $v^{\ast}$ by converting the prototype $\vec{p}$ with the Riemannian exponential map and an application of the transpose of $n^{\ast}$'s canonicalizing rotor:

\begin{equation}
v^{\ast} = \exp_{n^{\ast}}\!\left(R(n^{\ast})\top \vec{p}\right).
\end{equation}

$R(n^{\ast})\top \vec{p}$ rotates $\vec{p}$ into the tangent space at $n^{\ast}$ . Then the Riemannian exponential $\exp_{n^{\ast}}(\vec{p})$ takes the tangent vector $\vec{p}$ and moves along the geodesic starting at $n^{\ast}$. The vector direction is which geodesic to follow and the length is how far along that arc to go (in radians).

\subsection{Differentiation from Related Work}
Our approach is related to recent advances in understanding linear representations in language models. 
As discussed in Section 2.2,
\citet{park2025} use a ``causal inner product'' that respects semantic structure  in a function space using the Riesz isomorphism. 
However, RISE uses Riemannian geometry to operate consistently on the curved manifolds.
Both methods take advantage of geometric properties, but the methods are distinctly different. 

Crucially, RISE transformations exhibit commutativity: applying multiple semantic transformations yields consistent results regardless of order (see Appendix \ref{app:rise_theory}).
This commutativity property provides strong evidence for the LRH, as it demonstrates that semantic transformations behave like vector additions in the tangent space—geodesics serve as the curved-space generalization of straight lines. 
The preservation of additive structure across semantic operations suggests that the geometric framework captures fundamental algebraic properties of meaning composition. 
We discuss more about the commutativity properties in Appendix \ref{app:rise_theory}.

Furthermore, the analysis in \citet{park2025} focused on categorical relationships in the unembedding space of language models; our work examines discourse-level transformations in sentence embeddings across multiple languages.
RISE effectively implements a non-Euclidean transformation that aligns with the natural curved manifold structure of the embedding space. 
This connection to high-dimensional  geometry provides theoretical grounding for why rotational operations can capture semantic transformations more effectively than simple vector additions, and extends the linear subspace hypothesis to curved/geodesic subspaces.

\section{Experimental Design}
\subsection{Discourse-level Semantic-Syntactic Changes \& Language Selection}
We focus on three discourse-level \rev{semantic-syntactic} transformations that vary in their context-dependence:

\textbf{Negation:} The logical reversal of the propositional content of a statement; where the proposition is ''P'' we take the negation to be ''not-P.'' Moreso, we are negating the predicate. This transformation is semantically precise and should exhibit high geometric consistency across contexts and languages.

\textbf{Conditionality:} Converting declarative statements into conditional constructions (``P'' → ``If P''). This introduces modal semantics that may interact with contextual factors.

\textbf{Politeness:} Increasing the social formality or deference level of utterances. This is highly context-dependent and culturally variable, making it a challenging test case for geometric consistency.

We selected seven morphologically diverse languages to ensure broad coverage of morphological, syntactic phenomena, and resource levels: English, Spanish, Japanese, Tamil, Thai, Arabic, and Zulu. 
This selection spans multiple language families (Indo-European, Sino-Tibetan, Dravidian, Afroasiatic, Niger-Congo) and different morphological types (analytic, agglutinative, fusional).
The languages also represent different levels of language model availability and resources. 
The diversity is crucial because different languages realize semantic transformations through distinct linguistic mechanisms. For instance, negation might be expressed through:
(1) Particles (i.e. English ``not''); (2)  Affixes (i.e. Tamil verb-internal negation, Japanese ``nai''); and (3) Auxiliary constructions (i.e. English ``does/has not'').
By testing across this range, we can determine whether geometric consistency reflects universal semantic properties or is merely an artifact of particular linguistic structures. 

\subsection{Datasets, Embedding Models, \& Linear Baselines}
We use three datasets and three models for evaluation. We  \rev{used two open-source, external datasets: \textbf{The Benchmark of Linguistic Minimal Pairs (BLiMP)} \citep{warstadt2020BLiMP} and \textbf{Sentences Involving Compositional Knowledge (SICK)} \citep{marelli2014sick}, and synthetically generated one dataset, referred to as the \textbf{Synthetic Multilingual} dataset.} For each language-transformation combination in the Synthetic Multilingual dataset, we generated 1,000 neutral-transformed sentence pairs using GPT-4.5 with carefully controlled prompts (see Appendix \ref{app:prompts}). 
To ensure robust analysis, we implemented several diversity controls (see Appendix \ref{app:data-generation}).

We compare three multilingual embedding models: \textbf{text-embedding-3-large} \citep{openai-text-embedding-3-large-2024}, \textbf{bge-m3 \footnote{Bge-m3  should be m3 as titled in the final version of \citep{chen2024bge}, but we referenced the model as bge-m3 in this paper and figures.}} \citep{chen2024bge}, and \textbf{mBERT} \citep{devlin2019bert}.
The text-embedding-3-large model produces 3072-dimensional vectors, bge-m3 produces 1024-dimensional vectors, and mBERT produces 768-dimensional vectors. All selected models produce constant-length embeddings that reside on a hypersphere making them suitable for our geometric analysis. This dimensional diversity allows us to test whether RISE effectiveness depends on embedding dimensionality. 
We calculate a \textit{rotor alignment score} where the scores represent mean cosine similarity between predicted embedding vectors and the semantically transformed pair on held-out test sets, with higher values indicating more consistent geometric structure.
\rev{Table~\ref{tab:cosine_interpretation} describes how the cosine similarity scores are interpreted.}

\rev{We include Mean Difference Vectors (MDV), and Procrustes alignment as baseline comparisons because they represent  standard linear approaches used to model transformations in embedding spaces. 
MDV test whether simple difference vectors can capture semantic or cross-lingual structure, while Procrustes evaluates whether a single global rotation can align transformed embeddings. 
MDV is the geometrically correct analogue of the Euclidean additive method for modern spherical embeddings, providing a stronger and fairer baseline for RISE.}

\begin{table}[h]
\centering
\small
\begin{tabularx}{\linewidth}{p{0.22\linewidth} X p{0.28\linewidth}}
\toprule
\textbf{Cosine Similarity Range} & 
\textbf{Interpretation} & 
\textbf{Supporting Literature} \\
\midrule
$\geq 0.80$ 
& Strong, consistent geometric structure 
& \cite{reimers2019sentencebert} \\

$0.65$--$0.80$ 
& Moderate, reliable structure 
& \cite{mikolov2013linguistic,ethayarajh2019contextual} \\

$0.50$--$0.65$ 
& Weak or variable structure 
& \cite{ethayarajh2019contextual,conneau2018you} \\

$< 0.30$ 
& Inconsistent or failing transformation 
& \cite{artetxe2018robust,conneau2018you} \\
\bottomrule
\end{tabularx}
\caption{Interpretation of cosine similarity magnitudes used throughout this work. Higher values indicate stronger geometric consistency between predicted and target embeddings. These thresholds are stricter than prior work but remain consistent with the established interpretations in the literature.}
\label{tab:cosine_interpretation}
\end{table}






\section{Results}

\subsection{Cross-Language Transfer Comparison}
This section discusses the comparison of \rev{embedding} models trained in one of the seven languages and tested on all seven. The results of this section demonstrate RISE multilingual performance computed by three embedding models. See Appendix \ref{app:heatmaps} for comprehensive results across all phenomena for each model. 

\textbf{Negation} emerges as the most robust discourse-level, \rev{semantic-syntactic} transformation, achieving the highest mean rotor alignment score \textbf{(0.788)} across all model-language combinations with performance ranging from 0.686 to 0.918. Figure \ref{fig:negation} demonstrates RISE performance on negation for each model.
RISE transformations for negation are most geometrically consistent in text-embedding-3-large. 
Negation's strong performance indicates that generalizable discourse-level, \rev{semantic-syntactic} changes are captured by RISE and best applied cross-lingually in text-embedding-3-large.

{\setlength{\intextsep}{0pt}
 \setlength{\abovecaptionskip}{2pt}
 \setlength{\belowcaptionskip}{0pt}
 \begin{figure}[H]
   \centering
   \includegraphics[width=\textwidth]{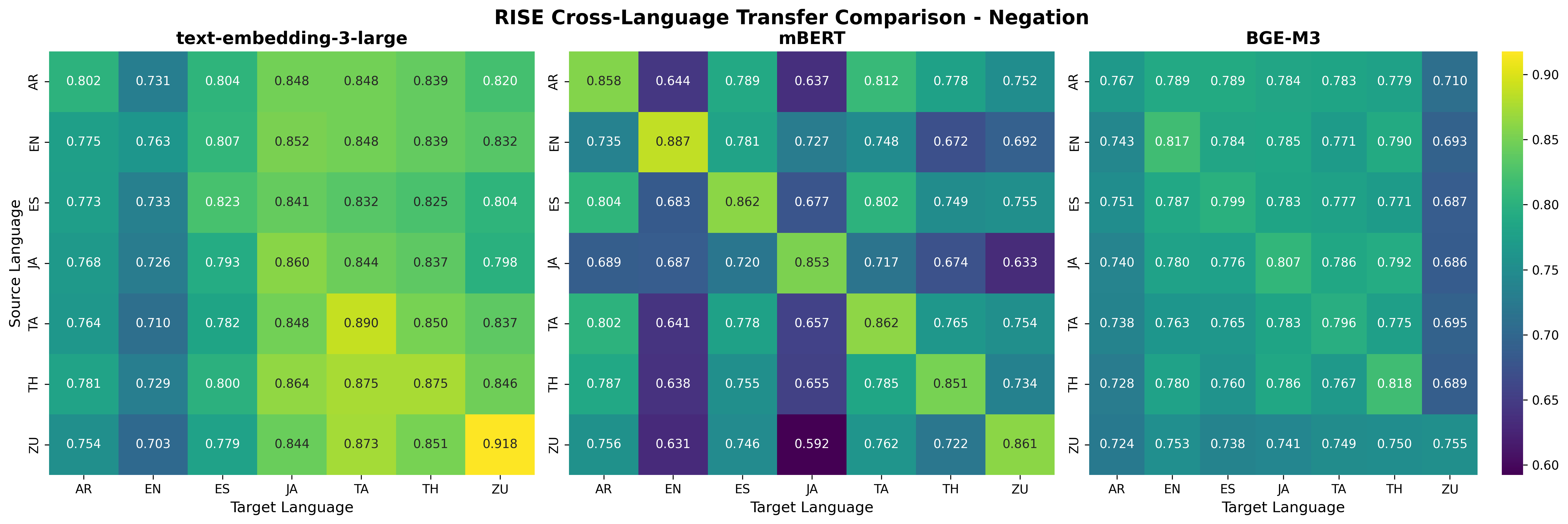}
   \caption{Embedding model heatmap cross-lingual transfer comparison on negation.}
   \label{fig:negation}
 \end{figure}
 \vspace{-\parskip} 
}

\textbf{Conditionality} demonstrates the highest stability and consistency across cross-language transfers, with the lowest performance variability (0.038) and most stable individual measurements (see Appendix \ref{app:heatmaps}). 
With the second highest, mean performance \textbf{(0.780)}, conditionality is particularly consistent results across all combinations. The strong transfer seen in bge-m3 and text-embedding-3-large suggests that conditional semantics are captured by stable geometric structure despite their modal complexity.

\textbf{Politeness} exhibits the most variable geometric structure, ranking third in performance \textbf{(0.762 mean)} with the highest performance variability (0.060) across combinations.  
This variability aligns with expectations, as politeness realizations depend heavily on cultural context and linguistic conventions, making cross-language transfer inherently more challenging.

The contrast across phenomena performance reflects an interesting insight. In the results, negation appears more robust, politeness is most variable, and conditionality sits between. 
This suggests embeddings encode logical semantic operators (negation and conditionality) with strong cross-lingual consistency. 
However, pragmatic operators (politeness) are less reliable due to inherent language-specific indicators and cultural conventions. 
Additionally, cross-language analysis revealed dimensionality does not directly predict cross-lingual performance. 
Despite having lower dimensionality, bge-m3 (1024-dim) demonstrated  the least variance in cross-language performance for all phenomena and languages. 
While text-embedding-3-large (3072-dim) showed highest cross-language performance (Figure~\ref{fig:text_embedding_3_large_heatmaps}),
mBERT (768-dim) showed strong monolingual performance, but exhibited high variability, particularly for politeness in cross-language settings. 
These results highlight that training methodology and architectural choices matter more than raw embedding dimensionality for cross-language semantic transfer.

\rev{The cross-language analysis fully presented in Appendix \ref{app:heatmaps} supports our hypothesis that discourse-level \rev{semantic-syntactic} transformations correspond to geometric operations on the embedding manifold.}
The variation across models, preservation of linguistic relationships across languages, and transformation patterns indicate that RISE successfully identifies \rev{semantic-syntactic} transformation on the embedding manifold.  
The limitations and future work are discussed further on.

\begin{figure}[htbp]
  \centering
  \includegraphics[width=0.30\linewidth]{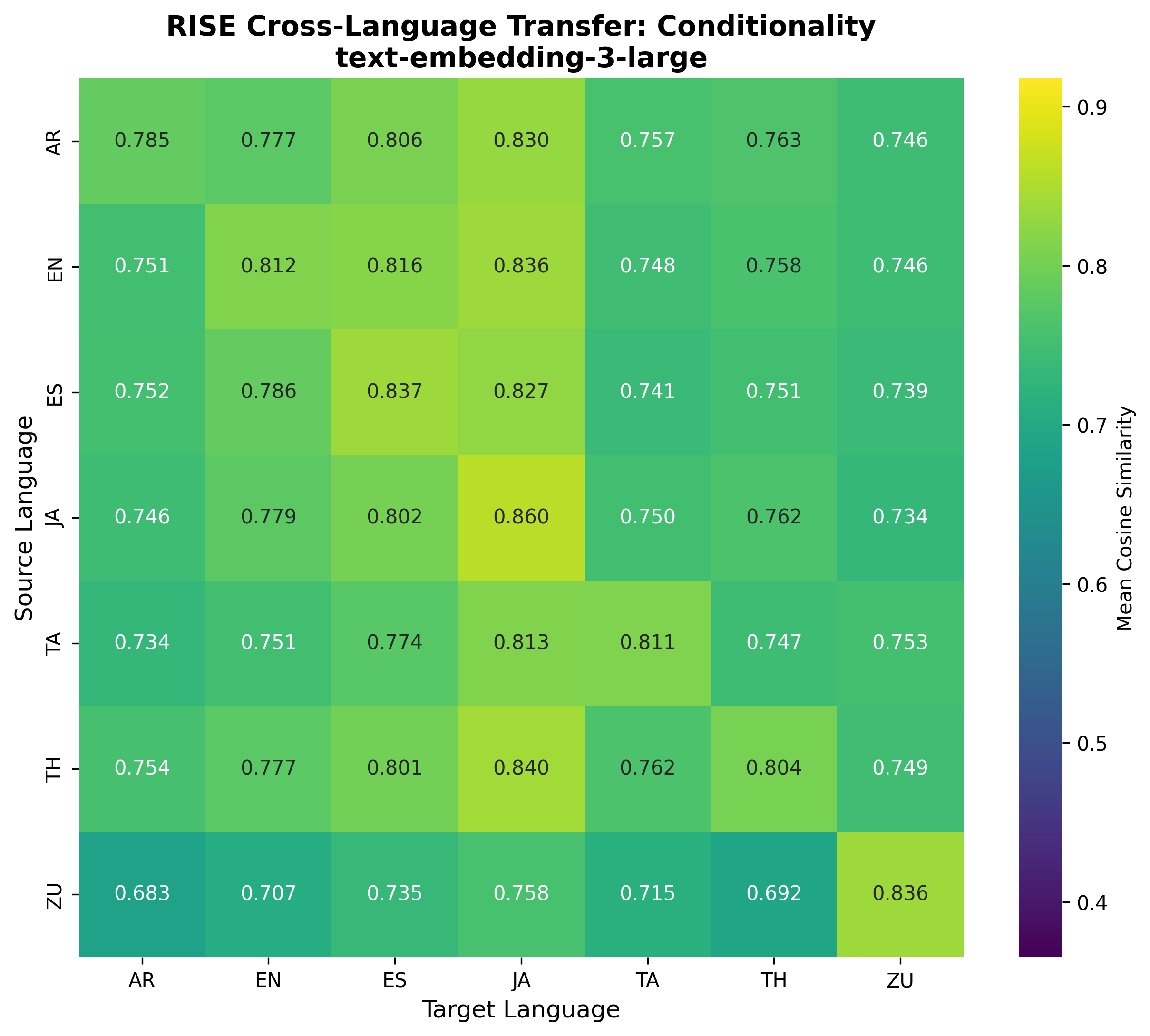}
  \includegraphics[width=0.30\linewidth]{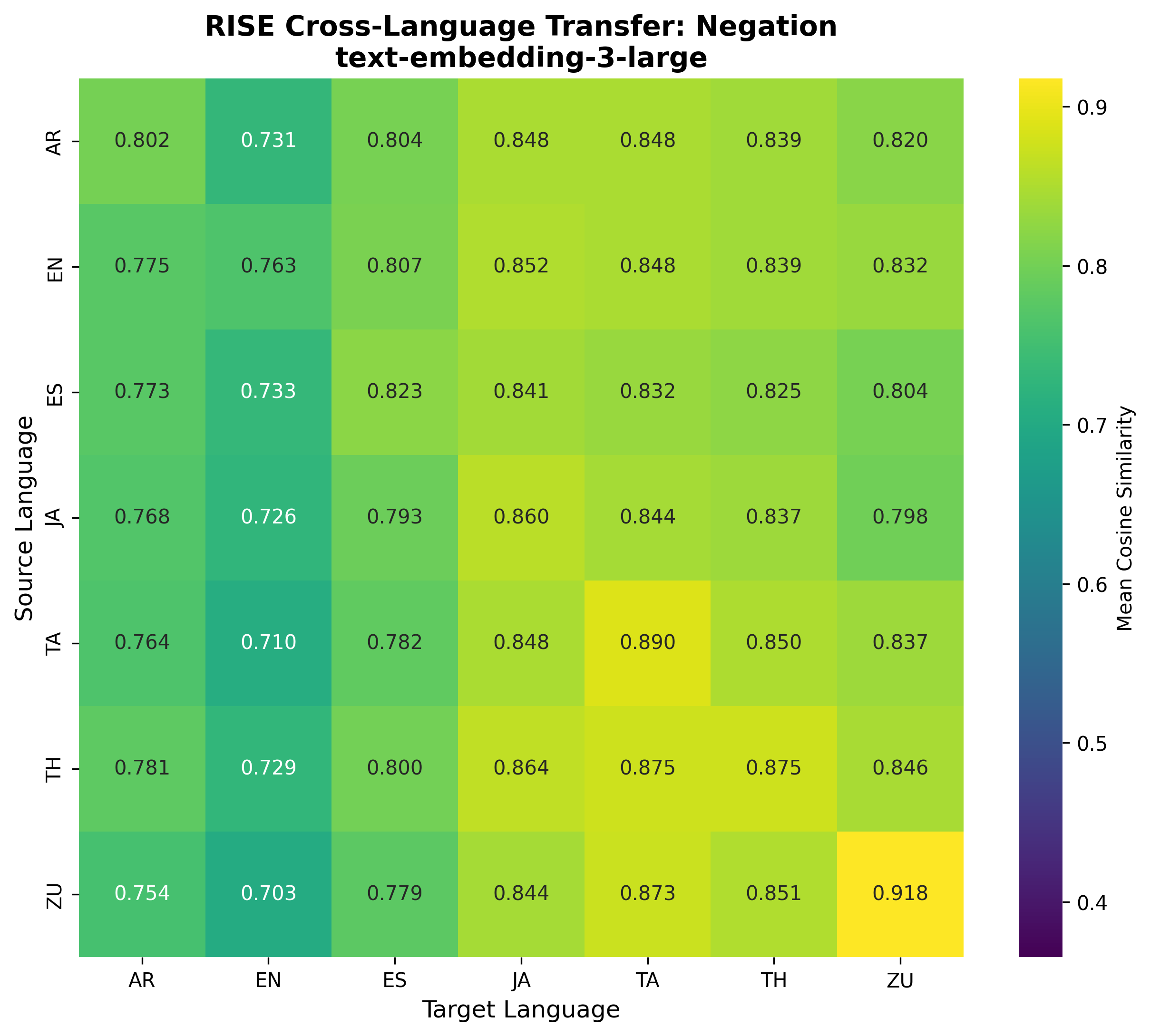}
  \includegraphics[width=0.30\linewidth]{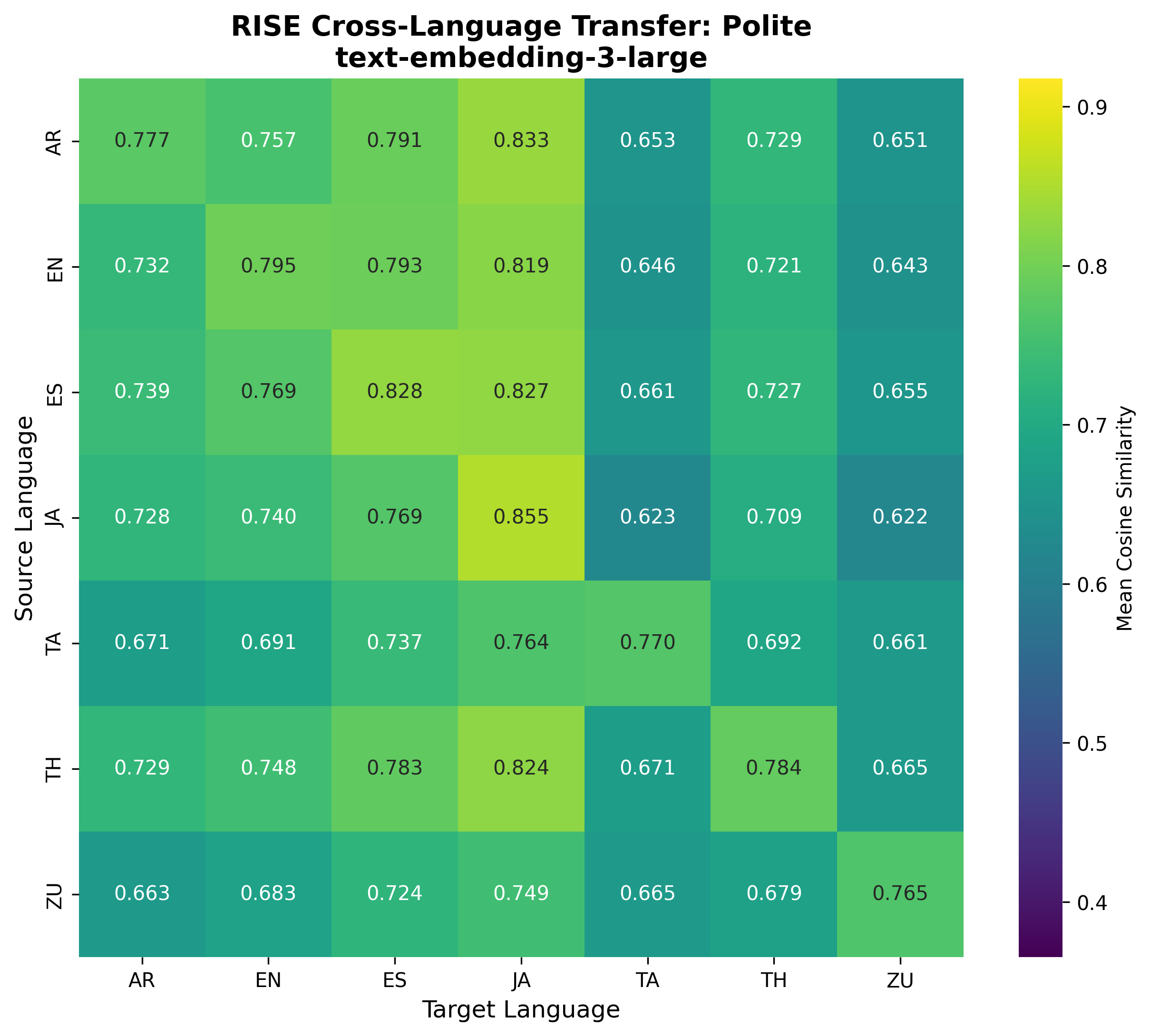}
  \caption{Cross-language transfer heatmaps for text-embedding-3-large showing RISE performance across all language pairs for conditionality, negation, and politeness transformations. Darker colors indicate higher cosine similarity between predicted and target embeddings.}
  \label{fig:text_embedding_3_large_heatmaps}
\end{figure}

\subsection{Cross-Model Transfer Comparison}
To evaluate RISE prototypes' robustness to transfer across different embedding architectures, we conducted cross-model mapping experiments using the method developed by \citet{morris2020linearity}. This approach learns statistical mappings between embedding spaces through principal component analysis (PCA) and distributional alignment, enabling transfer of learned RISE prototypes from one model to another.
We specifically examined transfer from text-embedding-3-large (3072-dimensional) to bge-m3 (1024-dimensional), demonstrating cross-model semantic transfer across different dimensionalities and training objectives. For each language pair and phenomenon, we learn RISE prototypes in text-embedding-3-large using 80\% of the data, map these prototypes and $e_1$ to bge-m3 space, and evaluate performance on native bge-m3 embeddings using the remaining 20\%.
Figure \ref{fig:cross_model_heatmaps} demonstrates comprehensive cross-model and cross-language transfer results. 

Cross-model transfer from text-embedding-3-large to bge-m3 reveals strong language-dependent performance. 
English achieves 0.80-0.82 similarity across all transformations, while other languages cluster around 0.70-0.75, and Zulu consistently scores 0.63-0.66. 
This 20\% performance gap persists across conditionality, negation, and politeness transformations.
These results suggest rotations can transfer between architecturally different models, but their effectiveness depends critically on source language, indicating that learned transformations are not architecture-independent.
The consistent English advantage across models suggests these embedding spaces share more robust geometric structures for English, likely reflecting training data imbalances (Anglo-centric bias in the composition of the model's training data).
The consistent language ranking across different semantic transformations (conditionality, negation, politeness) suggests the bias is structural rather than semantic.
In conclusion, RISE successfully captures semantic patterns that perform consistently in a cross-model comparison. 

\begin{figure}[htbp]

\centering
\includegraphics[width=0.30\textwidth]{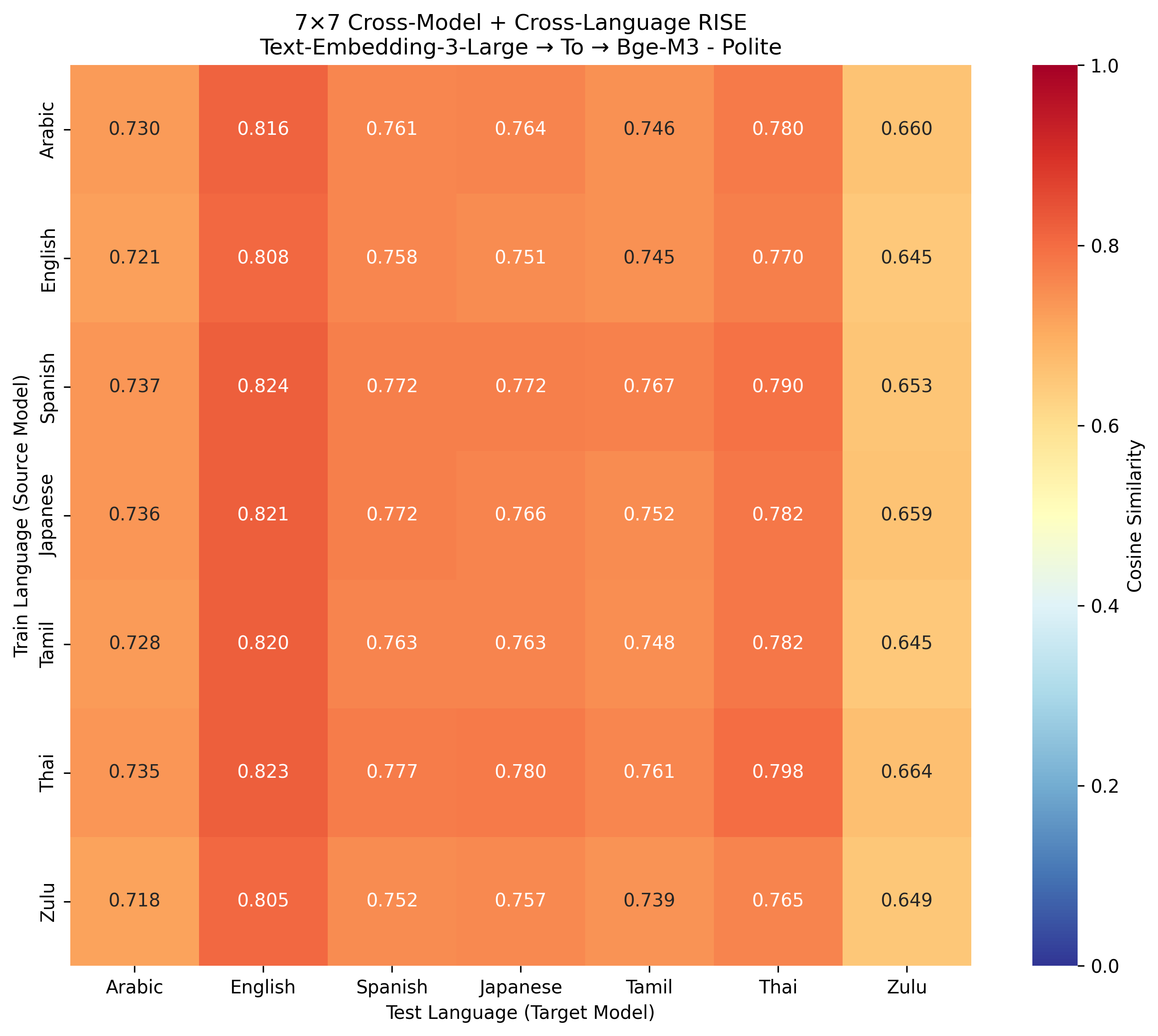}
\includegraphics[width=0.30\textwidth]{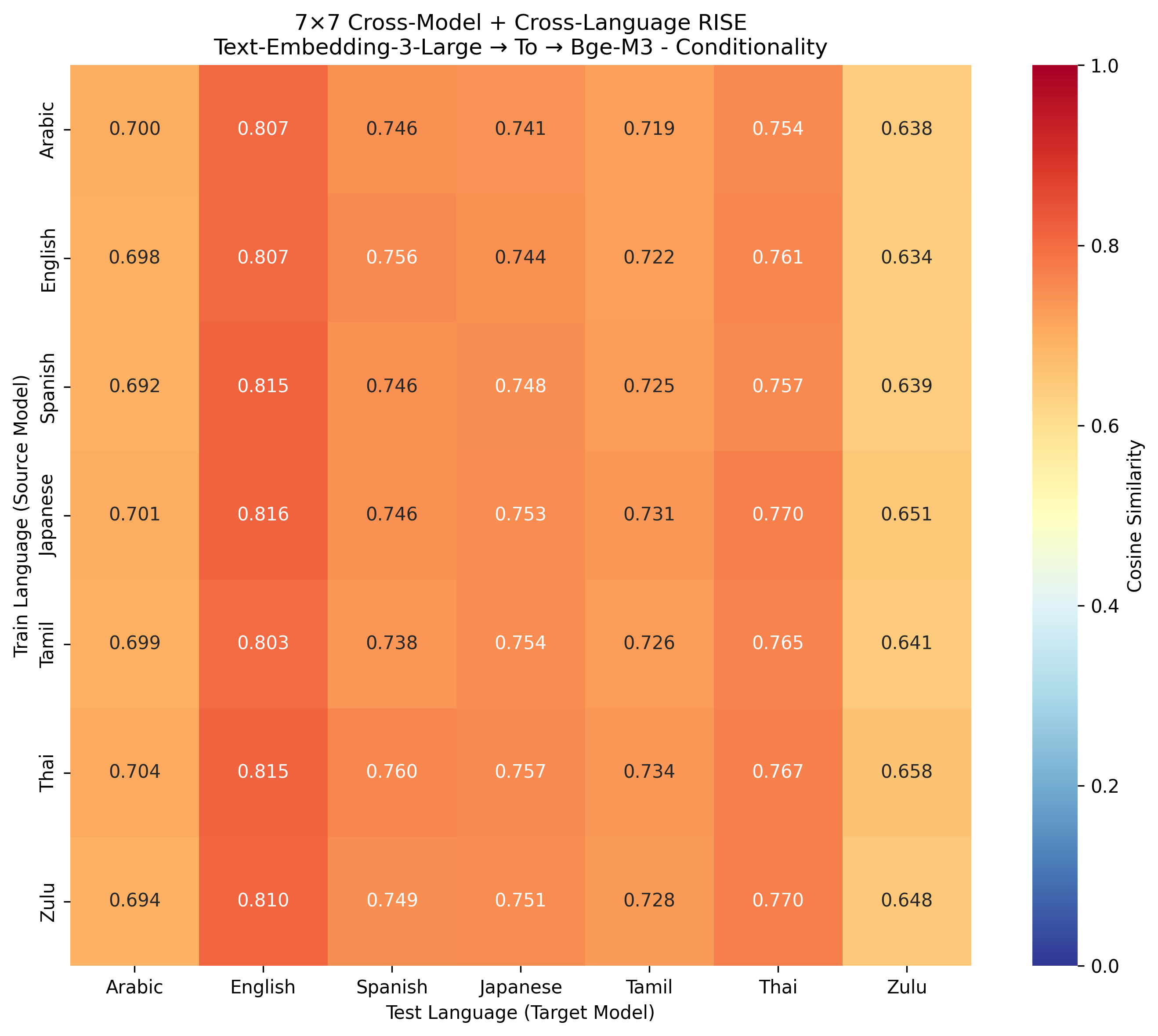}
\includegraphics[width=0.30\textwidth]{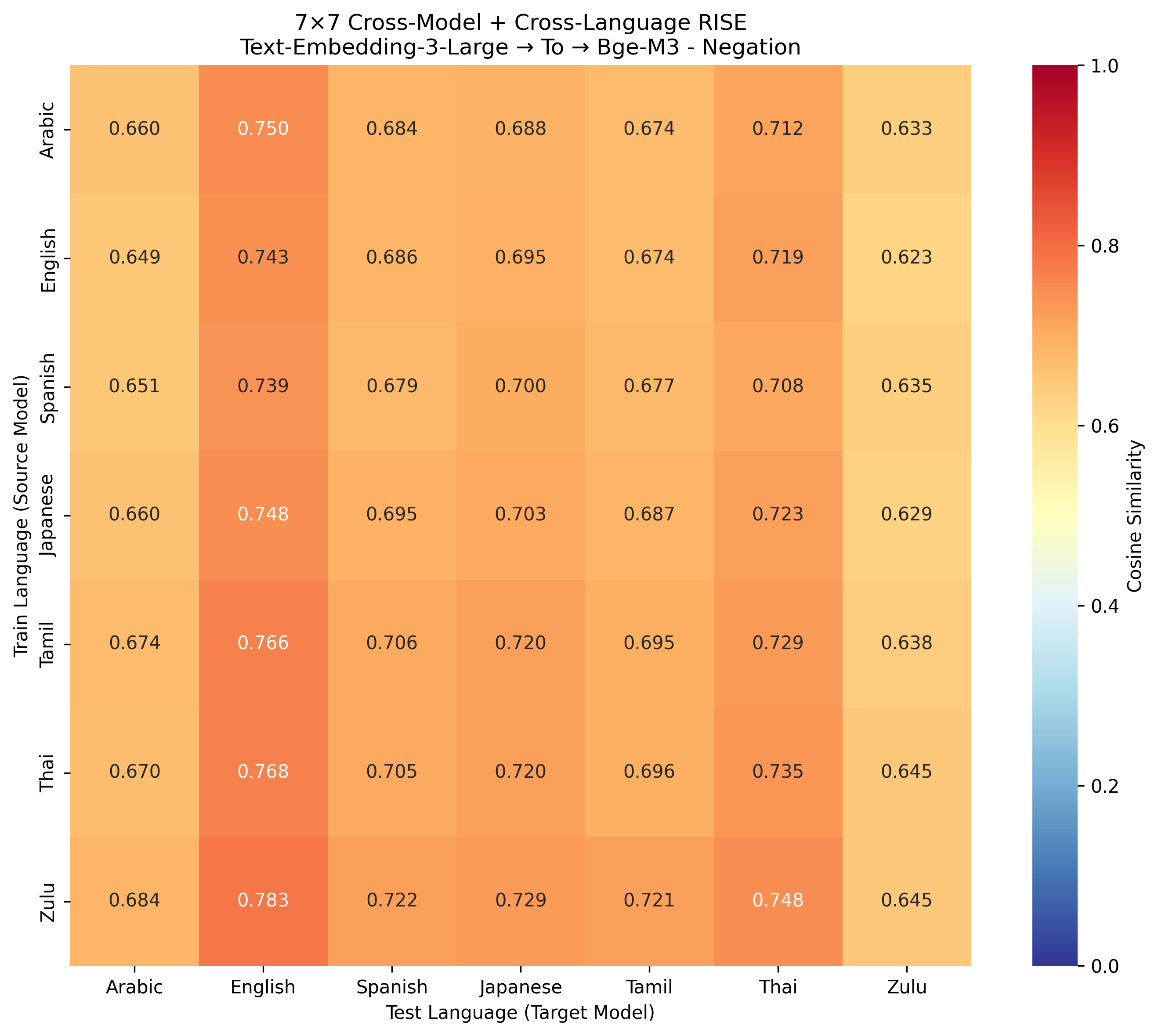}
\caption{Cross-Model Semantic Transfer: text-embedding-3-large → bge-m3. Each cell shows transfer performance from source language prototype (text-embedding-3-large) to target language test set (bge-m3). Diagonal elements represent pure cross-model transfer, while off-diagonal elements show combined cross-model and cross-language transfer using Morris statistical mapping \citep{morris2020linearity}.}
\label{fig:cross_model_heatmaps}
\end{figure}

\subsection{English Task-Based Comparison}
Our main investigation is how well RISE peforms in in multi-lingual settings. However there are limited external datasets for evaluating the performance discourse-level, \rev{semantic-syntactic} transformation tasks. Due to the limited resources, we had to select the most related datasets, BLiMP and SICK. BLiMP is a paired sentence dataset for major grammatical phenomena in English, and SICK is a dataset with paired sentences with entailment, contradiction, and neutral labels. 

Table \ref{tab:master_validation} summarizes RISE performance across the three selected datasets. The results confirm that all models achieve strong performance, with particular strengths varying by dataset: mBERT excels on grammatical tasks (BLiMP) and contradiction detection (SICK), while bge-m3 shows the most consistent performance across synthetic multilingual data.
The dramatic performance gap between BLiMP ($>$0.92) and SICK (0.62-0.74) suggests that RISE rotations might be capturing something more specific than general semantic transformations.

The high BLiMP performance indicates RISE excels at preserving grammatical/syntactic structure, while the moderate SICK performance suggests these same rotations don't preserve semantic relatedness as well. 
These results show that benchmark choice dramatically affects relative model ranking. 
Robustness depends on whether the task prioritizes cross-lingual consistency (favoring bge-m3) or raw performance on specific phenomena (favoring text-embedding-3-large for negation, mBERT for grammatical tasks).

\begin{table}[H]
\centering
\caption{RISE Performance Across Three Datasets: The performance is measured with the rotor alignment score between RISE-steered embeddings and target embeddings where bold values indicate best performance per dataset. text-embedding-3-large is abbreviated as TE3L.}
\label{tab:master_validation}
\begin{tabular}{lccc}
\toprule
\textbf{Model} & \textbf{Synthetic Multilingual} & \textbf{BLiMP Benchmark} & \textbf{SICK Dataset} \\
\midrule
TE3L (3072d) & 0.771 & 0.929 & 0.623 \\
bge-m3 (1024d) & \textbf{0.782} & 0.956 & 0.631 \\
mBERT (768d) & 0.709 & \textbf{0.961} & \textbf{0.736} \\
\midrule
\textbf{Average} & 0.754 & 0.949 & 0.663 \\
\bottomrule
\end{tabular}

\end{table}

\subsection{Linear Baseline Comparisons}
The full results presented in Appendix~\ref{app:lin_baselines} compare RISE against two standard baselines, Mean Difference Vectors (MDV) and Procrustes alignment, across the same three datasets. MDV is not Euclidean. 
MDV preserves spherical structure and naturally resembles RISE more closely than Procrustes. This distinction is directly reflected in the results: MDV and RISE transfers best across languages where Procrustes fails.

The strongest performance appears in monolingual English evaluation (BLiMP), while performance drops substantially for Procrustes on semantic relatedness (SICK) shown in Table~\ref{tab:appendixC_summary}.
This shift in performance reflects Procrustes' inability to identify a generalizable semantic–syntactic relationship as expected by method. Procrustes fits a single global rotation which is too rigid for the cross-lingual and cross model analysis
In contrast, RISE maintains stable cross-lingual and cross-model performance (e.g., App.~\ref{app:heatmaps}. Figures 5–7), indicating that geometric operations on the manifold better capture discourse-level semantic structure than Euclidean differences.

The MDV vs.~RISE vs.~Procrustes results reinforce our earlier claim that methods operating on the curved manifold (where sentence embeddings inherently reside) perform better than Euclidean/linear methods. 
Most steering and probing techniques operate in linear space, and we conjecture that this geometric mismatch helps explain why linear methods struggle to generalize. 
In short, Procrustes fits a single global rotation which is too rigid for the cross-lingual and cross model analysis. Geometric transformations, like RISE and MDV, are better suited for semantic-syntactic analysis and cross-lingual stability.

\begin{table}[htbp]
\centering
\small
\begin{tabular}{lccc}
\toprule
\textbf{Method} & 
\textbf{Monolingual Syntactic} & 
\textbf{Monolingual Semantic} &
\textbf{Cross-Language Transfer} \\
& (BLiMP) & (SICK) & (All Phenomena) \\
\midrule

\textbf{RISE} 
& \textbf{Strong} (0.97) 
& \textbf{Strong} (0.84) 
& \textbf{Moderate--Strong} (0.74--0.89) \\

\textbf{MDV} 
& \textbf{Strong} (0.97) 
& \textbf{Strong} (0.83) 
& \textbf{Moderate--Strong} (0.72--0.91) \\

\textbf{Procrustes} 
& \textbf{Strong} (0.99) 
& \textbf{Moderate} (0.67) 
& \textbf{Failing--Weak} (0.25--0.62) \\
\bottomrule
\end{tabular}
\caption{\rev{Condensed summary of baseline comparisons from Appendix~C using the cosine-similarity interpretation scale from Table~\ref{tab:cosine_interpretation}. RISE and MDV show Strong monolingual and Moderate--Strong cross-language structure, whereas Procrustes drops to Weak or Failing consistency outside syntactic, same-language settings.}}
\label{tab:appendixC_summary}
\end{table}

\section{Discussion \& Future Work}
Our findings demonstrate that meaningful \rev{semantic-syntactic} operations can be recovered as geometric transformations in modern language model representations. 
RISE successfully identifies consistent geometric structure for discourse-level \rev{semantic-syntactic} changes, primarily for text-embedding-3-large and negation in multilingual settings. 
\rev{The results demonstrating spherical methods, RISE and MDV, out perform linear methods, Procrustes alignment, provide positive results for extending the LRH to spherical spaces.}

Evaluation benchmarks (Table \ref{tab:master_validation}) reveal task-dependent effectiveness. RISE achieves near-perfect performance on syntactic acceptability (BLiMP: 0.93-0.96) but only moderate performance on semantic similarity (SICK: 0.62-0.74), suggesting better alignment with grammatical rather than semantic transformations.
Section 6.1 shows that negation and conditionality are the most generalizable discourse-level, \rev{semantic-syntactic} changes captured by RISE and best applied cross-lingually in text-embedding-3-large.
\rev{Our cross-model transfer experiments expose an English-centric bias, with English achieving 20\% higher transfer scores than languages like Zulu. 
This English-centric bias persists across all semantic transformations, indicating that current multilingual models encode geometric structures that prioritize English.
Future work should focus on developing more equitable multilingual representations and investigating which language-specific geometric structures are an inherent feature of the models.}

Together these results support that RISE is most successful at identifying semantic transformation with distinct grammatical factors, but more work is needed to justify semantic transformations in multilingual models are universal geometric operations.
First, our analysis focuses on three specific linguistic transformation types.
Future work should expand to additional semantic and pragmatic phenomena to test the generality of geometric consistency principles. 
Second, while our experiments used three diverse embedding models (text-embedding-3-large, bge-m3, and mBERT), validation across additional architectures would strengthen claims about the universality of geometric semantic structure. 
\rev{Third, the reliance on GPT-4.5 for data generation may introduce subtle biases toward English-centric conceptualizations of semantic phenomena. Future work should incorporate more diverse data sources and validation by native speakers.}


\section{Conclusion}
The ability to learn geometric transformations for discourse changes relates to work on text generation and steering vectors \citep{turner2023activation, li2023inference}. 
Our rotor-based approach, RISE, provides a geometric framework for understanding and improving interpretability in language models.
This work investigated whether discourse-level \rev{semantic-syntactic} transformations in multilingual embedding spaces correspond to intrinsic geometric operations, specifically rotations identified through the RISE method. 
\rev{Our comprehensive evaluation across multiple baselines, models, languages, and datasets reveals a more complex reality than initially hypothesized.}
This work demonstrates that modern language model representations maintain \rev{interpretable geometric structure for some semantic-syntactic} transformations, extending the promise of geometric semantics from early word embeddings to contemporary transformer models. 
We show that:

\begin{enumerate}
    \item \rev{Semantic transformations with clear syntactic mapping demonstrate the most consistent geometric structure.}
    \item \rev{RISE successfully identifies semantically meaningful geometric structure in high-dimensional embedding spaces that generalizes cross-lingually and across model architecture.}
\end{enumerate}

As language models continue to evolve, understanding these geometric foundations will be crucial for developing more \rev{interpretable} AI systems.
By revealing transferable geometric structure in semantic transformations (e.g. negation and conditionality), this work opens new possibilities for \rev{understanding} language model behavior through geometric interventions. 
\rev{Our work promotes geometric methods as more appropriate  approaches to cross-lingual semantic interpretation, achieving 77\%-95\% cross-language transfer effectiveness across typologically diverse languages.}
By developing RISE, we demonstrate that \rev{interpretable} structure exists for some grammatically distinct semantic transformations, providing a tools for understanding how these systems encode semantic knowledge.
\rev{While RISE remains valuable for analyzing model-specific semantic structures, claims about universal geometric operations require substantial qualification.}


\bibliography{main}
\bibliographystyle{iclr2026_conference}

\appendix
\section{Mathematical Properties of RISE}
\label{app:rise_theory}
These mathematical results support our main claims in the paper. 
Lemma~\ref{lem:exp_log} provides the explicit exponential and logarithmic map formulas that underlie RISE’s use of geodesics on the unit hypersphere. 
Theorem~\ref{thm:rise_commute} formalizes that sequential RISE edits commute up to second order, showing that different discourse-level transformations can be applied in any order without significant distortion. This result highlights the local geometric consistency of RISE transformations, rather than implying global additive steering.
Proposition~\ref{prop:complexity} shows that each RISE transformation can be applied in $O(d)$ time and memory, demonstrating the method’s scalability to modern high-dimensional embeddings. 
Together, these results provide theoretical grounding for both the geometric consistency and the practical efficiency reported in the main text.

\subsection{Geometry Preliminaries on the Sphere}

We work on the unit sphere $\mathbb{S}^{d-1}\subset\mathbb{R}^d$ with the standard round metric. 
For $n\in\mathbb{S}^{d-1}$, the tangent space is $T_n\mathbb{S}^{d-1}=\{x\in\mathbb{R}^d:\langle x,n\rangle=0\}$. 
The exponential map $\exp_n:T_n\mathbb{S}^{d-1}\to\mathbb{S}^{d-1}$ is defined for all tangent vectors, while the logarithmic map $\log_n$ is well-defined for all $v\in\mathbb{S}^{d-1}$ except the antipode $v=-n$. 
For each $n$, fix an orthogonal map $R(n)\in O(d)$ such that $R(n)n=e_1$, where $e_1=(1,0,\dots,0)^\top$. 
When analyzing local behavior (e.g., Theorem~\ref{thm:rise_commute}), we take $R(\cdot)$ to be any $C^1$ (continuously differentiable) choice on a neighborhood of the geodesic segment(s) under consideration; such a local choice always exists.

\begin{lemma}[Exponential and logarithmic maps on the unit sphere]
\label{lem:exp_log}
For $n\in\mathbb{S}^{d-1}$, tangent vector $\xi\in T_n\mathbb{S}^{d-1}$, and point $v\in\mathbb{S}^{d-1}\setminus\{-n\}$,
\[
\exp_n(\xi) = \cos(\|\xi\|)\,n + \sin(\|\xi\|)\,\frac{\xi}{\|\xi\|}, 
\qquad
\log_n(v) = \arccos(\langle n,v\rangle)\,\frac{v-\langle n,v\rangle n}{\|v-\langle n,v\rangle n\|}.
\]
\end{lemma}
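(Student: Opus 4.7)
The plan is to derive both formulas from the standard fact that geodesics on the unit sphere are great circles traversed at constant speed. I would begin by writing the geodesic equation in ambient coordinates: a smooth curve $\gamma:\mathbb{R}\to\mathbb{S}^{d-1}$ is a geodesic if and only if its ambient acceleration is normal to the sphere, i.e.\ $\ddot\gamma(t)\in\mathrm{span}\{\gamma(t)\}$. I would then verify directly that the candidate curve $\gamma(t)=\cos(t)\,n+\sin(t)\,u$, for any unit vector $u\in T_n\mathbb{S}^{d-1}$, satisfies $\ddot\gamma(t)=-\gamma(t)$ and has unit speed, hence is a unit-speed geodesic through $n$ with initial direction $u$.

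Next, I would assemble the exponential formula. Given nonzero $\xi\in T_n\mathbb{S}^{d-1}$, set $u=\xi/\|\xi\|$; by the ODE uniqueness theorem the curve above is the unique geodesic with $\gamma(0)=n$ and $\dot\gamma(0)=u$. Evaluating at time $\|\xi\|$ (so that under the affine reparameterization $t\mapsto t\|\xi\|$ the initial velocity matches $\xi$) yields the claimed expression for $\exp_n(\xi)$. The case $\xi=0$ reduces to $\exp_n(0)=n$, consistent with the continuous extension $\sin(\|\xi\|)/\|\xi\|\to 1$ as $\|\xi\|\to 0$.

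For the logarithmic map I would invert the exponential. Any minimizing geodesic from $n$ to $v$ lies in the 2-plane $\mathrm{span}\{n,v\}$, which is well-defined precisely when $v\neq\pm n$; the exclusion $v=-n$ is exactly where uniqueness fails (the 2-plane is not determined). Writing $v=\cos\theta\,n+\sin\theta\,u$ with $\theta=\arccos\langle n,v\rangle\in[0,\pi)$ and $u\in T_n\mathbb{S}^{d-1}$ a unit vector, I would solve for $u$ by projecting onto $T_n\mathbb{S}^{d-1}$: subtracting the component along $n$ gives $v-\langle n,v\rangle n=\sin\theta\,u$, and dividing by $\|v-\langle n,v\rangle n\|=\sin\theta\ge 0$ recovers $u$. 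Multiplying by the geodesic length $\theta$ gives $\log_n(v)=\theta\,u$, which is the stated formula; the edge case $v=n$ gives $\log_n(v)=0$ in the continuous extension.

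The main obstacle is essentially bookkeeping around the degenerate cases: the unit vector $\xi/\|\xi\|$ is undefined at $\xi=0$ and the direction $(v-\langle n,v\rangle n)/\|v-\langle n,v\rangle n\|$ is undefined at $v=\pm n$. Otherwise the proof reduces to a one-line ODE verification plus an algebraic inversion, so there is no serious technical difficulty beyond stating the formulas as continuous extensions near these degeneracies and recording the excluded antipode.
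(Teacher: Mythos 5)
Your proposal is correct and takes essentially the same route as the paper, which simply cites the standard fact that geodesics on $\mathbb{S}^{d-1}$ are great circles (deferring to Absil et al.); you supply the details the paper omits, namely the ambient-acceleration verification that $\gamma(t)=\cos(t)\,n+\sin(t)\,u$ is a unit-speed geodesic, the reparameterization giving $\exp_n$, and the algebraic inversion giving $\log_n$. The handling of the degenerate cases $\xi=0$ and $v=\pm n$ is also correct.
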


\begin{proof}
These formulas follow from the fact that geodesics on $\mathbb{S}^{d-1}$ are great circles in $\mathbb{R}^d$ (unit-radius sphere). 
See, e.g., \citet[Sec.~5.4]{Absil2008}.
\end{proof}

\subsection{Rotor Construction and Implementation}
\label{app:rotor}

In Clifford algebra terms, a \emph{rotor} is an element of $\mathrm{Spin}(d)$ that rotates vectors by the sandwich product $x \mapsto r x \tilde r$, where $\tilde r$ denotes reversion. For our purposes, we only require an orthogonal operator $R(n)\in O(d)$ with $R(n)n=e_1$ that depends smoothly on $n$. One closed-form rotor mapping $n\mapsto e_1$ (valid when $n\neq -e_1$) is
\[
r(n) \;=\; \frac{1 + e_1 n}{\sqrt{2(1+\langle e_1,n\rangle)}}, \qquad
r(n)\,n\,\tilde r(n) = e_1.
\]
In practice we realize this as a standard linear operator without explicit
Clifford algebra structures. Two efficient $O(d)$ realizations are:

\begin{itemize}
\item \textbf{Householder reflection:}
$H(n) = I - 2\frac{ww^\top}{\|w\|^2}$ with $w=n-e_1$, which satisfies
$H(n)n=e_1$ (determinant $-1$).
\item \textbf{Givens rotation:} a $2\times 2$ rotation in the plane spanned
by $\{n,e_1\}$, extended by the identity elsewhere, with determinant $+1$.
\end{itemize}

Both satisfy the required conditions $R(n)n=e_1$ and local $C^1$ smoothness,
and are numerically stable away from $n\approx -e_1$. In the antipodal case
($n\approx -e_1$) we use a two-step construction: map $n$ to an auxiliary
orthogonal vector $u\perp e_1$, then $u$ to $e_1$. In all cases, applying
$R(n)$ or $R(n)^\top$ to a vector costs $O(d)$ operations.

\subsection{Commutativity Properties of Sequential RISE Operations}

\subsubsection{The RISE Sequential Procedure}

Given $n_0\in\mathbb{S}^{d-1}$ and prototypes $\vec{p}_A,\vec{p}_B\in T_{e_1}\mathbb{S}^{d-1}$:
\[
\textbf{Apply A:}\;\;\xi_A=R(n_0)^\top \vec{p}_A,\;n_1=\exp_{n_0}(\xi_A),
\qquad
\textbf{Apply B:}\;\;\xi_B=R(n_1)^\top \vec{p}_B,\;n_2=\exp_{n_1}(\xi_B).
\]

\subsubsection{First-Order Commutativity Analysis}

\begin{theorem}[RISE commutativity to first order]
\label{thm:rise_commute}
For small prototype magnitudes $\|\vec{p}_A\|,\|\vec{p}_B\|\ll 1$,
\[
d\!\left(\text{result of }A\circ B,\;\text{result of }B\circ A\right)
= O(\|\vec{p}_A\|\cdot\|\vec{p}_B\|).
\]
\end{theorem}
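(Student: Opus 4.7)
The plan is to Taylor-expand both iterates $n_2^{AB}$ and $n_2^{BA}$ in the small parameters $\epsilon_A = \|\vec{p}_A\|$ and $\epsilon_B = \|\vec{p}_B\|$ up to mixed second order, then show that all pure-$\epsilon_A$ and pure-$\epsilon_B$ contributions coincide between the two orderings and cancel upon subtraction, leaving only terms proportional to $\epsilon_A\epsilon_B$. A structural reason to expect this outcome is that if either prototype vanishes, the corresponding RISE step reduces to the identity, so the two compositions are trivially equal; hence no monomial in $\epsilon_A$ alone or $\epsilon_B$ alone can survive the subtraction.

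First, I invoke Lemma~\ref{lem:exp_log} to obtain the Taylor expansion $\exp_{n}(\xi) = n + \xi - \tfrac{1}{2}\|\xi\|^2 n + O(\|\xi\|^3)$ for small tangent $\xi$. Writing $\xi_A^{(0)} := R(n_0)^{\top} \vec{p}_A$ and $\xi_B^{(0)} := R(n_0)^{\top} \vec{p}_B$, orthogonality of $R(n_0)$ gives $\|\xi_A^{(0)}\|=\epsilon_A$ and $\|\xi_B^{(0)}\|=\epsilon_B$. The first step of $A \circ B$ then produces
\[
n_1 = n_0 + \xi_A^{(0)} - \tfrac{\epsilon_A^2}{2}\, n_0 + O(\epsilon_A^3).
\]
Local $C^1$ regularity of $R$ yields $R(n_1)^{\top} \vec{p}_B = \xi_B^{(0)} + \Phi_{AB} + O(\epsilon_A^2 \epsilon_B)$, where $\Phi_{AB}$ is the first-order correction, linear in $\xi_A^{(0)}$ and $\vec{p}_B$ through $DR(n_0)$, and hence of size $O(\epsilon_A\epsilon_B)$. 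Applying $\exp_{n_1}$ to this tangent vector, using the exact identity $\|R(n_1)^{\top} \vec{p}_B\| = \epsilon_B$ from orthogonality, and collecting all terms yields
\[
n_2^{AB} = n_0 + \xi_A^{(0)} + \xi_B^{(0)} + \Phi_{AB} - \tfrac{\epsilon_A^2+\epsilon_B^2}{2}\, n_0 + O(\epsilon_A^3) + O(\epsilon_B^3) + O\!\bigl(\epsilon_A\epsilon_B(\epsilon_A+\epsilon_B)\bigr).
\]

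The identical computation with the roles of $A$ and $B$ swapped produces the same formula for $n_2^{BA}$ with $\Phi_{AB}$ replaced by the analogous $\Phi_{BA}$, linear in $\xi_B^{(0)}$ and $\vec{p}_A$. The pure-$A$ contribution (including its $O(\epsilon_A^3)$ remainder) and the pure-$B$ contribution match across the two expansions because each equals the single-step output $\exp_{n_0}(\xi_A^{(0)})$ or $\exp_{n_0}(\xi_B^{(0)})$, which is independent of ordering. Subtracting gives
\[
n_2^{AB} - n_2^{BA} = \Phi_{AB} - \Phi_{BA} + O\!\bigl(\epsilon_A\epsilon_B(\epsilon_A+\epsilon_B)\bigr) = O(\epsilon_A\epsilon_B).
\]
Since both points lie on $\mathbb{S}^{d-1}$, the ambient Euclidean bound transfers to the intrinsic geodesic distance, because chordal and geodesic distances on the sphere agree to leading order (they differ only by a cubic correction in the separation).

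The main obstacle is bookkeeping: one must carefully isolate the two distinct sources of non-commutativity, namely (i) the linearization of $R$ at the perturbed base point $n_1$ versus $n_0$, and (ii) the curvature term $-\tfrac{1}{2}\|\xi\|^2 n$ in $\exp_{n_1}$ evaluated at a perturbed base point, and verify that their contributions combine into the antisymmetric expression $\Phi_{AB}-\Phi_{BA}$ without leaving any uncompensated pure-$A$ or pure-$B$ second-order residue. Uniformity of the remainder estimates follows automatically from the $C^1$ hypothesis on $R$ over a compact neighborhood of the relevant geodesic segments, which is guaranteed by the preliminaries in Section~A.1.
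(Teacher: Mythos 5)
Your proof is correct and follows essentially the same route as the paper's: Taylor-expand both composed exponential maps to second order, attribute the only order-dependent contribution to re-canonicalization at the displaced base point (your $\Phi_{AB}$ versus the paper's $O(\|\eta_A\|\,\|\vec p_B\|)$ perturbation of $R(n_1)^\top$), and subtract. You are somewhat more explicit than the paper about why the pure-$\epsilon_A$ and pure-$\epsilon_B$ terms cancel and about passing from chordal to geodesic distance, and you omit the paper's parallel-transport framing, but these are presentational rather than substantive differences.
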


\begin{proof}
Using Lemma~\ref{lem:exp_log}, expand $\exp_{n_0}(\xi_A)=n_0+\xi_A+O(\|\xi_A\|^2)$. 
Let $\eta_A=\xi_A$. 
Canonicalization at $n_1=n_0+\eta_A+O(\|\eta_A\|^2)$ differs from that at $n_0$ by $O(\|\eta_A\|)$. 

Let $P_{n_1\to n_0}:T_{n_1}\mathbb{S}^{d-1}\to T_{n_0}\mathbb{S}^{d-1}$ denote parallel transport along the short geodesic from $n_1$ to $n_0$. 
On the unit sphere, $\|P_{n_1\to n_0}-I\|=O(\|n_1-n_0\|)=O(\|\eta_A\|)$, where $I$ denotes the identity operator on the tangent space. 
With a $C^1$ choice of $R(\cdot)$, $\|R(n_1)^\top-R(n_0)^\top\|=O(\|n_1-n_0\|)=O(\|\eta_A\|)$. 
Therefore,
\[
P_{n_1\to n_0}\,R(n_1)^\top \vec p_B \;=\; R(n_0)^\top \vec p_B \;+\; O(\|\eta_A\|\,\|\vec p_B\|).
\]

Now expand the second step:
\[
n_2 = n_0 + R(n_0)^\top(\vec{p}_A+\vec{p}_B) + O(\|\vec{p}_A\|\|\vec{p}_B\|) + O(\|\vec{p}_A\|^2+\|\vec{p}_B\|^2).
\]
Swapping roles of $A$ and $B$ gives the same expansion with $\vec{p}_A,\vec{p}_B$ reversed. 
Subtracting yields a difference of order $\|\vec{p}_A\|\|\vec{p}_B\|$.
\end{proof}

\paragraph{Geometric interpretation.}
Re-canonicalization is equivalent (to first order) to parallel-transporting the next step’s vector back to the initial tangent space. 
On $\mathbb{S}^{d-1}$ with constant curvature, order effects are second order.

\subsection{Computational Complexity}

\begin{proposition}[Per-transformation complexity]
\label{prop:complexity}
Each RISE transformation can be implemented in $O(d)$ time and $O(d)$ memory:
\begin{enumerate}
    \item Canonicalization: applying $R(n)$ or $R(n)^\top$ costs $O(d)$.
    \item Logarithmic map $\log_n(v)$: $O(d)$ using Lemma~\ref{lem:exp_log}.
    \item Exponential map $\exp_n(\xi)$: $O(d)$ using Lemma~\ref{lem:exp_log}.
    \item Storage: prototype $\hat{\vec{p}}\in T_{e_1}\mathbb{S}^{d-1}$ costs $O(d)$.
\end{enumerate}
\end{proposition}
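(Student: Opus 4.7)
The plan is to verify each of the four items by appealing to representations already assembled earlier: Lemma~\ref{lem:exp_log} for the Riemannian maps and Section~\ref{app:rotor} for the canonicalization operator. Because each item reduces to a handful of basic vector operations (inner products, scalings, vector sums), the argument is really an accounting exercise; the only conceptual point is to never form $R(n)$ as a dense $d\times d$ matrix.

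I would dispatch items (2), (3), and (4) first since they follow directly from the explicit formulas. A tangent vector at $e_1$ is stored as $d$ floating-point coordinates, giving (4). For $\log_n(v)$, Lemma~\ref{lem:exp_log} decomposes the computation into one inner product $\langle n,v\rangle$ of cost $O(d)$, a projection $v-\langle n,v\rangle n$ of cost $O(d)$, one Euclidean norm and one $\arccos$ (costs $O(d)$ and $O(1)$), and a final scalar rescaling of cost $O(d)$. The exponential map $\exp_n(\xi)$ is symmetric: a single norm $\|\xi\|$, two trigonometric scalars, and two scalar-vector products, all $O(d)$. In both maps the working memory is dominated by a constant number of $d$-vectors.

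The main step is item (1), and this is the only place where one must be careful. A generic orthogonal $d\times d$ factor would require $O(d^2)$ time per application, so I would invoke the structured realizations of Section~\ref{app:rotor}. For the Householder form $R(n)=I-2ww^\top/\|w\|^2$ with $w=n-e_1$, applying $R(n)$ to any $x$ reduces to $x \mapsto x-2(\langle w,x\rangle/\|w\|^2)\,w$, i.e., one inner product plus one axpy, both $O(d)$, with $O(d)$ storage for $w$ and $\|w\|^2$. The transpose costs the same because Householder reflections are involutions; the Givens alternative reduces to a $2\times 2$ update in the plane spanned by $\{n,e_1\}$ and is likewise $O(d)$ to apply after $O(d)$ construction.

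The mild obstacle is numerical conditioning near the antipode $n\approx -e_1$, where $\|w\|\to 0$ for the standard Householder axis. I would handle this by the two-step fallback already described in Section~\ref{app:rotor}: route through an auxiliary unit vector $u\perp e_1$ via two successive Householder reflections, each of which remains well-conditioned and costs $O(d)$. Summing the $O(d)$ bounds across canonicalization, $\log$, $\exp$, and storage yields the claimed per-transformation complexity.
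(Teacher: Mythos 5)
Your proposal is correct and follows essentially the same route as the paper, which itself offers no formal proof beyond the implementation notes: operation-count the explicit formulas of Lemma~\ref{lem:exp_log} for the log/exp maps, and apply the structured Householder/Givens realizations of Section~\ref{app:rotor} (never materializing a dense $d\times d$ matrix) for canonicalization, with the involution property $H^\top = H$ covering the transpose. One small slip: for the Householder axis $w = n - e_1$, the degeneracy $\|w\|\to 0$ occurs as $n\to e_1$ (the identity case, as the paper's implementation note states), not at the antipode $n\approx -e_1$; the antipodal point is instead where the closed-form Clifford rotor and the logarithmic map break down, which is what the two-step construction in Section~\ref{app:rotor} addresses. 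This mix-up is harmless for the proposition, since every fallback remains $O(d)$, but the two degenerate configurations should not be conflated.
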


\paragraph{Comparison with matrix methods.}  
Dense $d\times d$ rotations require $O(d^2)$ time and memory. 
RISE achieves equivalent updates in $O(d)$.

\paragraph{Implementation note (Householder).}  
A practical canonicalization is the Householder reflection
\[
H(n)=I-2\frac{ww^\top}{\|w\|^2},\quad w=n-e_1,
\]
which maps $n\mapsto e_1$ in $O(d)$. 
Since $H(n)$ is a reflection ($\det=-1$), it suffices for canonicalization. 
Near $n\approx e_1$, one may switch to a numerically stable alternative.

\section{Cross-Language Transfer Analysis and Results}
\label{app:heatmaps}

To test whether geometric transformations generalize across languages, we conducted comprehensive cross-language transfer experiments. This section reports detailed results across three models and three semantic phenomena, analyzing both quantitative performance and geometric properties of learned transformations.

\begin{figure}[H]
    \label{fig:bge_m3_heatmaps}
    \centering
    \includegraphics[width=0.32\textwidth]{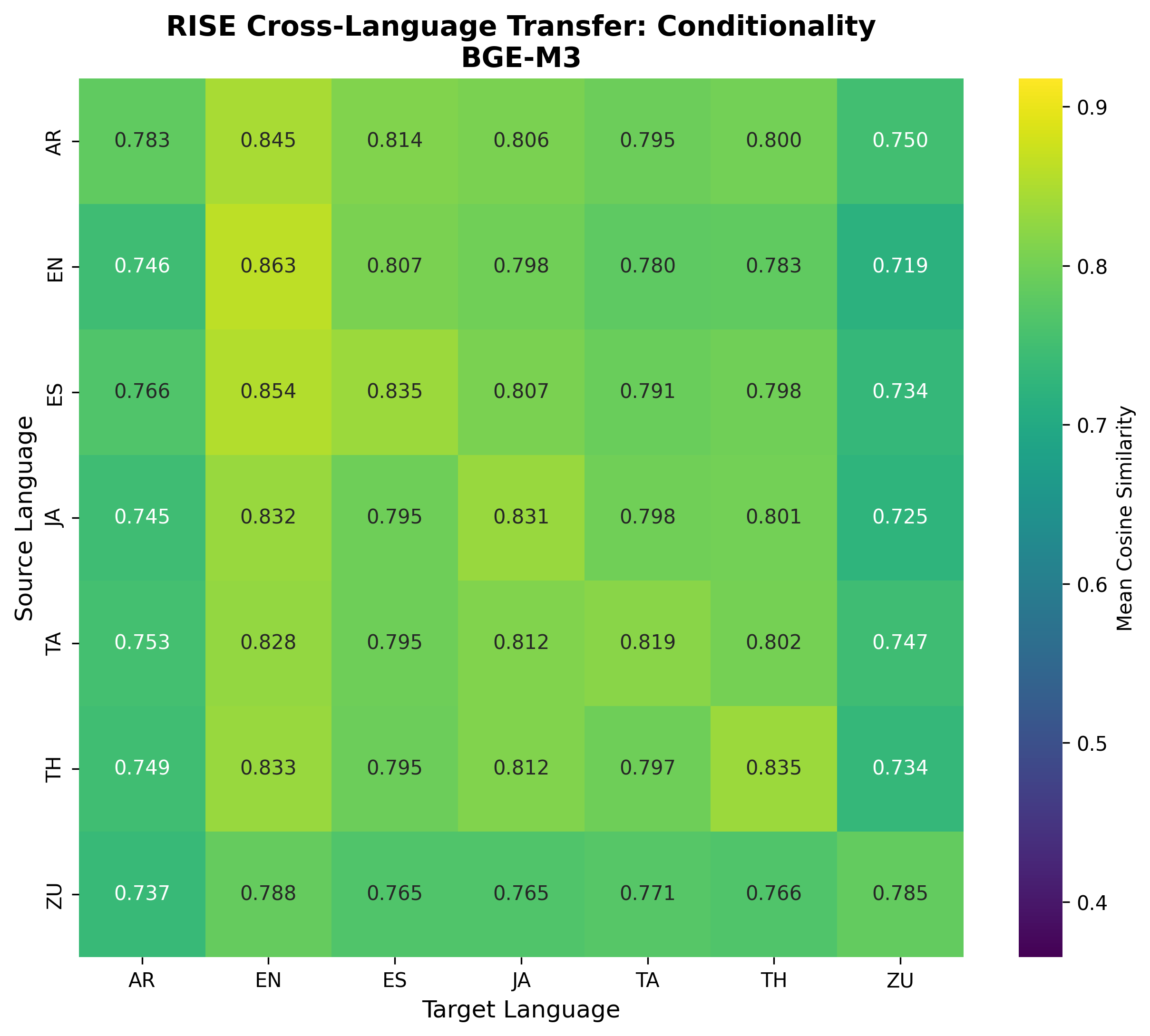}
    \includegraphics[width=0.32\textwidth]{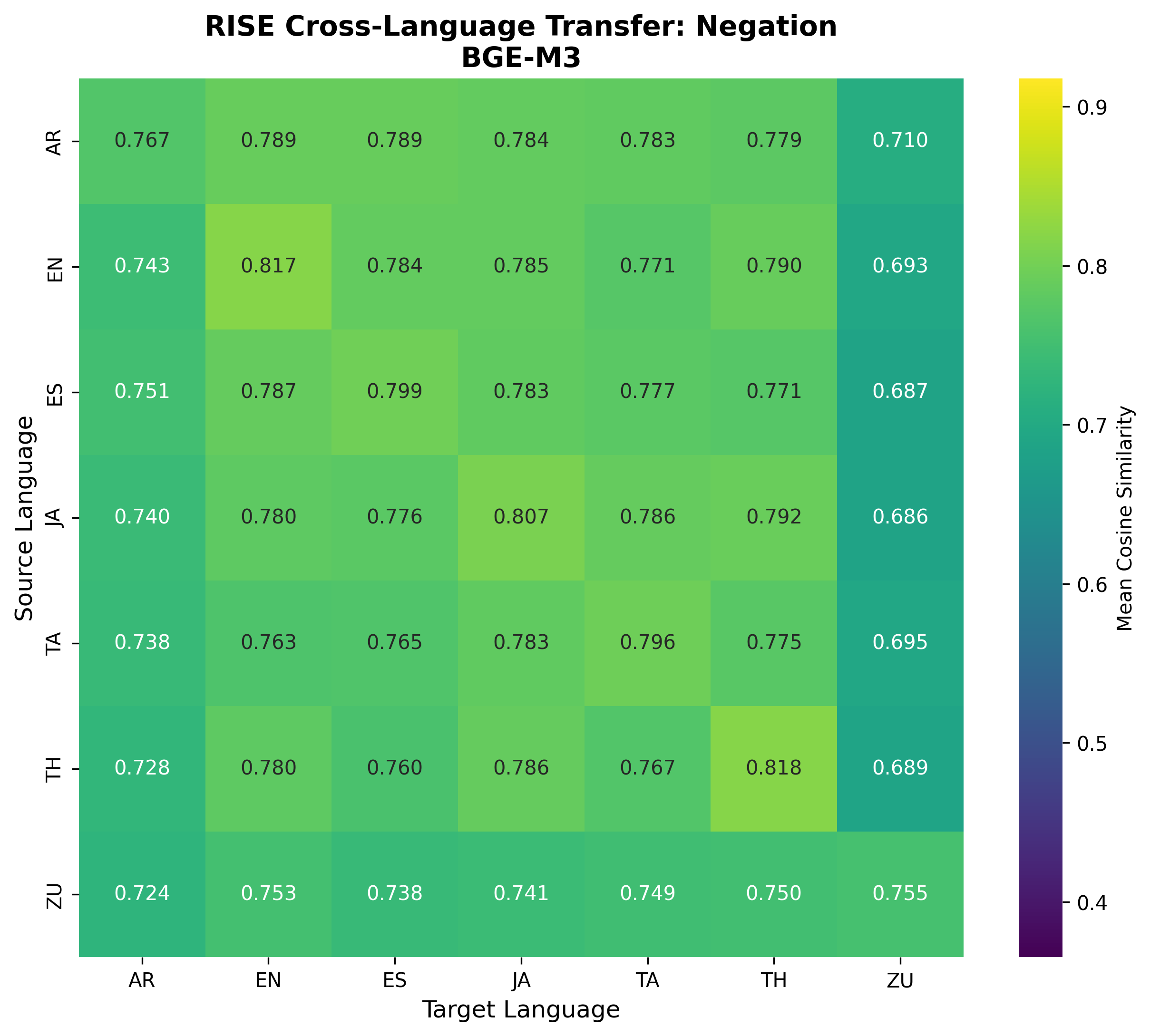}
    \includegraphics[width=0.32\textwidth]{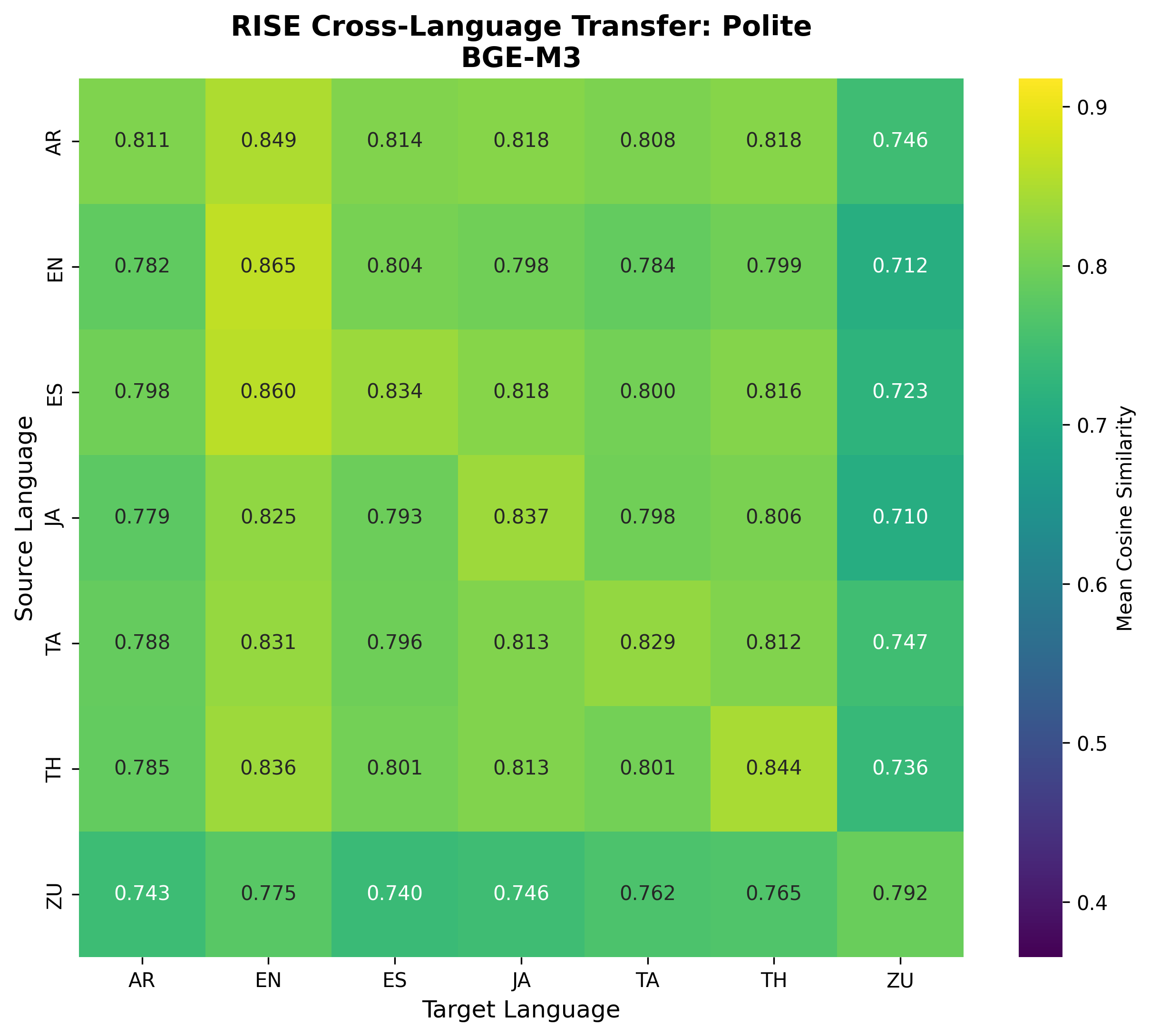}
    \caption{Cross-language transfer heatmaps for bge-m3 model showing RISE performance across all language pairs for conditionality, negation, and politeness transformations. Darker colors indicate higher cosine similarity between predicted and target embeddings.}
\end{figure}

\begin{figure}[H]
    \centering
    \includegraphics[width=0.32\textwidth]{appendix/rise_cross_language_corrected_text_embedding_3_large_conditionality_heatmap.png}
    \includegraphics[width=0.32\textwidth]{appendix/rise_cross_language_corrected_text_embedding_3_large_negation_heatmap.png}
    \includegraphics[width=0.32\textwidth]{appendix/rise_cross_language_corrected_text_embedding_3_large_polite_heatmap.png}
    \caption{Cross-language transfer heatmaps for text-embedding-3-large model showing RISE performance across all language pairs for conditionality, negation, and politeness transformations. Darker colors indicate higher cosine similarity between predicted and target embeddings.}
\end{figure}

\begin{figure}[H]
    \label{fig:mbert_heatmaps}
    \centering
    \includegraphics[width=0.32\textwidth]{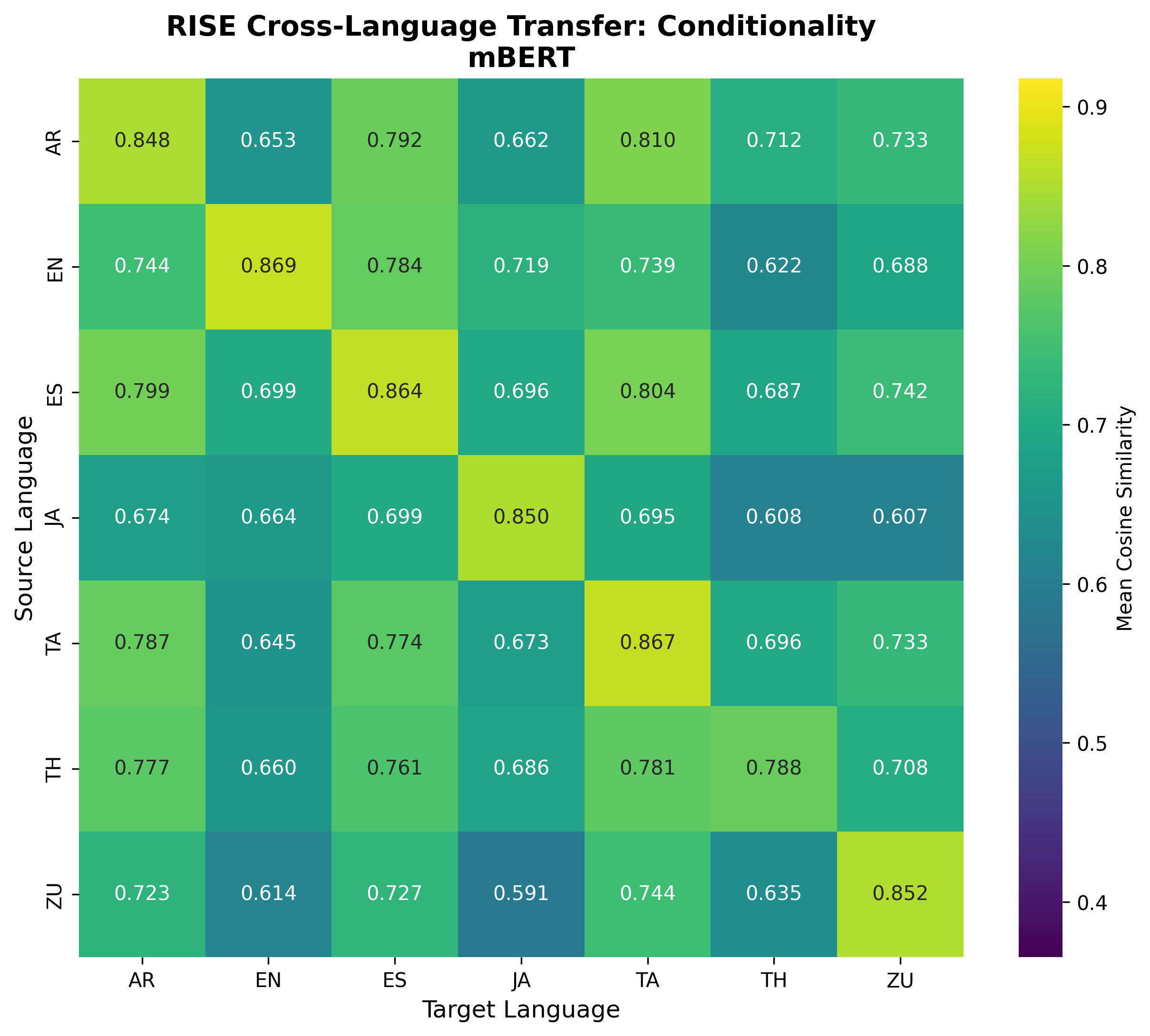}
    \includegraphics[width=0.32\textwidth]{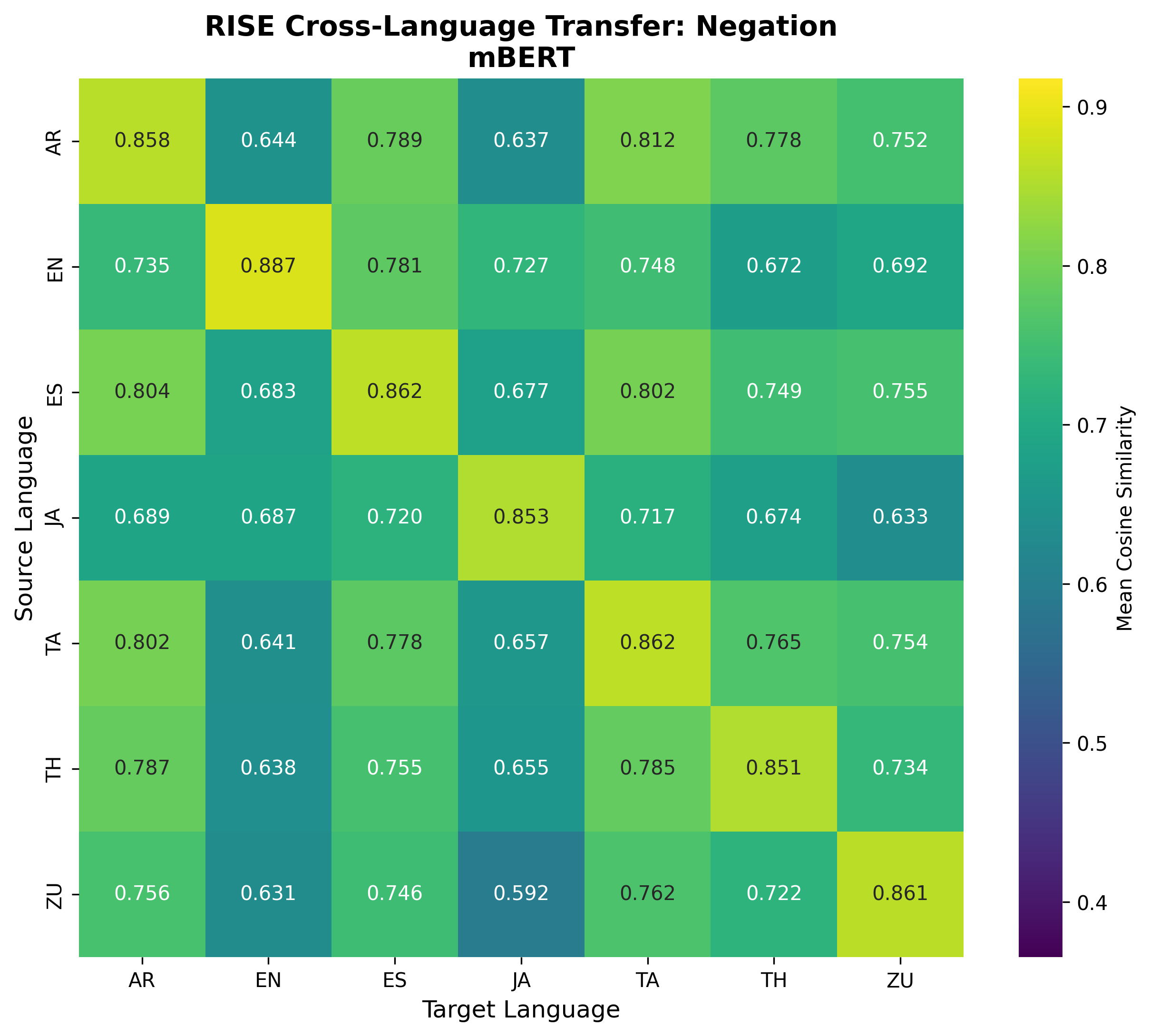}
    \includegraphics[width=0.32\textwidth]{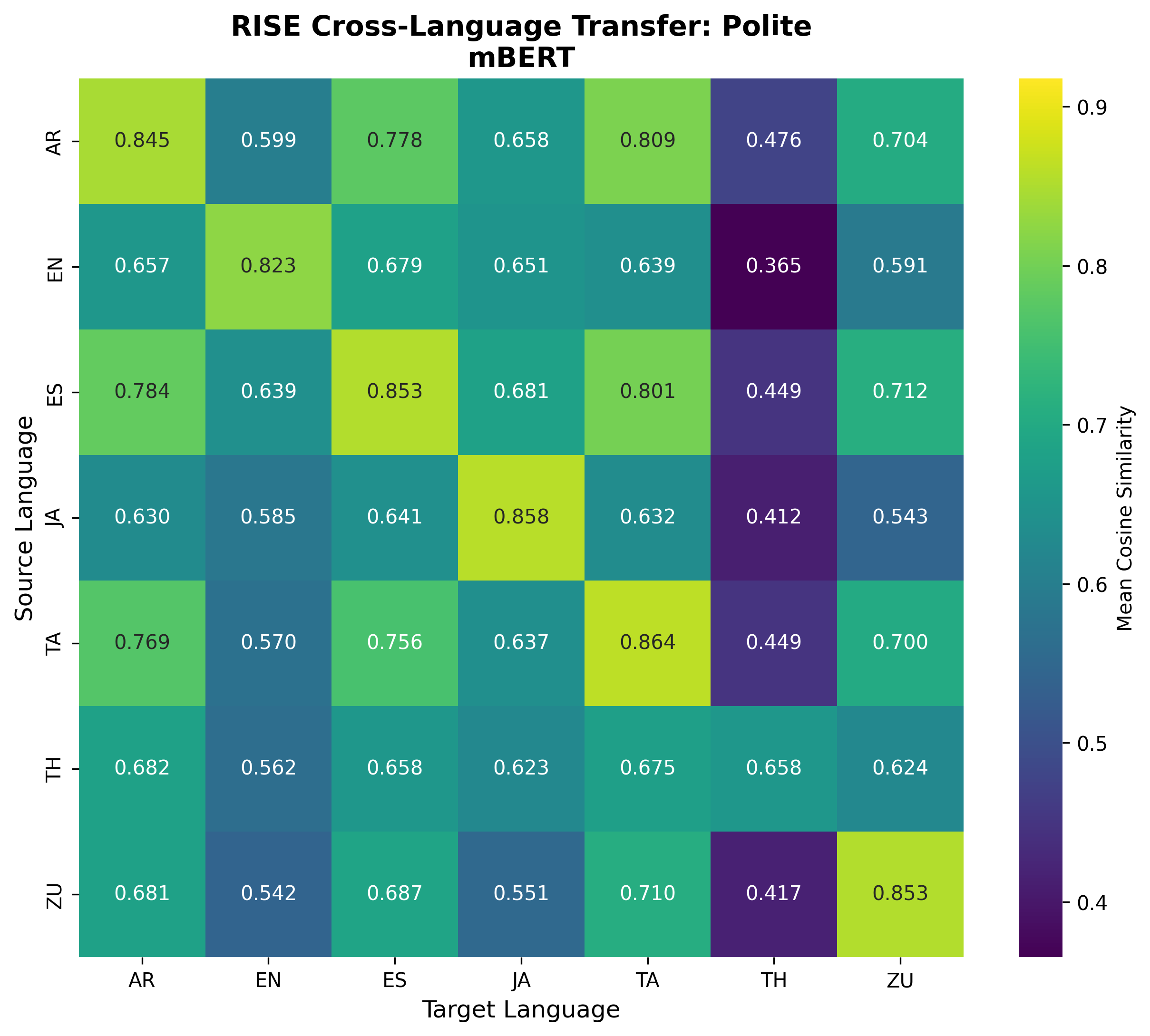}
    \caption{Cross-language transfer heatmaps for mBERT model showing RISE performance across all language pairs for conditionality, negation, and politeness transformations. Darker colors indicate higher cosine similarity between predicted and target embeddings.}
\end{figure}

\subsection{Cross-Language Transfer Performance}

The above heatmaps demonstrate comprehensive cross-language transfer results across our three models. Training rotor prototypes on one language and evaluating on others reveals promising cross-linguistic performance, particularly for negation and conditionality. Most language pairs show transfer scores above 0.70, with negation achieving particularly strong off-diagonal performance (most scores $>$ 0.80). 

\textbf{Negation} emerges as the most performant transformation, achieving the highest mean cross-language transfer scores (0.788 across all model-language combinations) with performance ranging from 0.686 to 0.918. 

\textbf{Conditionality} demonstrates the highest stability and consistency across cross-language transfers, with the lowest performance variability (0.038) and most stable individual measurements (0.056 average std deviation). Mean performance of 0.780 places it second overall.

\textbf{Politeness} shows more variation but still achieves substantial cross-linguistic success (most scores $>$ 0.70). 

\subsection{Geometric Analysis of Cross-Language Centroids}

Analysis of the learned centroids reveals additional insights into the geometric structure of semantic transformations. For each phenomenon, we computed ``ideal'' transformation vectors by averaging canonicalized transformed embeddings across languages.

For \textbf{negation}, the centroids show high similarity across languages (pairwise cosines $>$ 0.95). 

\textbf{Conditionality} centroids maintain high geometric consistency, supporting the observed stability in transfer performance across all model-language combinations.

\textbf{Politeness} centroids cluster more loosely but still maintain substantial similarity (pairwise cosines $>$ 0.87).

\subsection{Quantitative Cross-Language Analysis}

\begin{table}[htbp]
\centering
\caption{Complete Cross-Language Transfer Matrix: Statistical Summary}
\label{tab:cross_language_matrix}
\begin{adjustbox}{width=\textwidth,center}
\begin{tabular}{lccccc}
\toprule
\textbf{Model} & \textbf{Phenomenon} & \textbf{All Transfers} & \textbf{Monolingual} & \textbf{Cross-Lang} & \textbf{Ratio} \\
\midrule
\multirow{3}{*}{TE3L (3072d)} 
& Conditionality & 13.6× ± 0.7 & 14.5× & 13.5× & 0.93 \\
& Negation & 19.7× ± 1.2 & 20.6× & 19.6× & 0.95 \\
& Politeness & 23.1× ± 1.9 & 25.3× & 22.8× & 0.90 \\
\midrule
\multirow{3}{*}{bge-m3 (1024d)} 
& Conditionality & 13.9× ± 0.6 & 14.5× & 13.8× & 0.95 \\
& Negation & 18.5× ± 0.8 & 19.3× & 18.4× & 0.95 \\
& Politeness & 25.2× ± 1.2 & 26.4× & 25.1× & 0.95 \\
\midrule
\multirow{3}{*}{mBERT (768d)} 
& Conditionality & 12.8× ± 1.3 & 15.0× & 12.5× & 0.83 \\
& Negation & 18.0× ± 1.8 & 20.9× & 17.5× & 0.84 \\
& Politeness & 20.8× ± 3.9 & 26.1× & 20.0× & 0.77 \\
\bottomrule
\end{tabular}
\end{adjustbox}
\begin{tablenotes}
\small
\item Statistics computed across complete 7×7 language transfer matrix (49 language pairs per phenomenon).
\item Values show advantage ratios ± standard deviation across all language pairs.
\item Ratio indicates relative cross-language transfer effectiveness (Cross-Lang/Monolingual).
\item All models maintain strong cross-language performance (77\%--95\% of monolingual performance).
\end{tablenotes}
\end{table}

\begin{table}[htbp]
\centering
\caption{Model Architecture and Overall RISE Performance Summary}
\label{tab:model_summary}
\begin{adjustbox}{width=\textwidth,center}
\begin{tabular}{lcccc}
\toprule
\textbf{Model} & \textbf{Dims} & \textbf{Validation Avg} & \textbf{Cross-Lang Avg} & \textbf{Random Adv} \\
\midrule
TE3L & 3072 & 0.774 & 19.0× & 6.3× \\
bge-m3 & 1024 & \textbf{0.790} & \textbf{19.8×} & 11.7× \\
mBERT & 768 & \textbf{0.802} & 16.9× & \textbf{11.9×} \\
\bottomrule
\end{tabular}
\end{adjustbox}
\begin{tablenotes}
\small
\item Validation Avg: Mean performance across Synthetic Multilingual, BLiMP, and SICK datasets.
\item Cross-Lang Avg: Mean advantage ratio across English→Spanish and Japanese→English transfers.
\item Random Adv: Mean advantage ratio over random baselines in monolingual English scenarios.
\item Bold values indicate best performance in each category.
\end{tablenotes}
\end{table}

Tables \ref{tab:cross_language_matrix} and \ref{tab:model_summary} provide comprehensive quantitative analysis of cross-language transfer performance. Notably, all models maintain strong cross-language performance (77\%--95\% of monolingual performance), with bge-m3 showing the most consistent cross-language effectiveness across all phenomena.

\newpage
\section{Linear Baselines Comparisons}
\label{app:lin_baselines}

This appendix reports the full results for the linear baseline comparisons requested by the reviewers. We thank the reviewers for this valuable suggestion as these results did strengthen our paper. We implemented two  baselines: Procrustes alignment and Mean Difference Vectors (MDV).
MDV is not truly Euclidean: it computes mean displacements using the manifold’s geometry (via log/exp maps), preserving spherical structure. 
Thus MDV functions naturally resembles RISE more closely than Procrustes. 
We evaluated them alongside RISE on three datasets: BLiMP, SICK, and our multilingual synthetic dataset.

The strongest performance appears in monolingual English evaluation (BLiMP), while performance drops substantially for Procrustes on semantic relatedness (SICK) shown in Table~\ref{tab:appendixC_summary}.
This shift in performance reflects Procrustes' inability to identify a generalizable semantic–syntactic relationship as expected by method. Procrustes fits a single global rotation which is too rigid for the cross-lingual and cross model analysis
In contrast, RISE maintains stable cross-lingual and cross-model performance (e.g., App.~\ref{app:heatmaps}. Figures 5–7), indicating that geometric operations on the manifold better capture discourse-level semantic structure than Euclidean differences.

The MDV vs.~RISE vs.~Procrustes results reinforce our earlier claim that methods operating on the curved manifold (where sentence embeddings inherently reside) perform better than Euclidean/linear methods. 
Most steering and probing techniques operate in linear space, and we conjecture that this geometric mismatch helps explain why linear methods struggle to generalize. 
In short, Procrustes fits a single global rotation which is too rigid for the cross-lingual and cross model analysis. Geometric transformations, like RISE and MDV, are better suited for semantic-syntactic analysis and cross-lingual stability. 

\subsection{Cross-Language Transfer Heatmaps}
\label{app:cross_lang_heatmaps}

Figures~\ref{fig:cond_cross_appendix}--\ref{fig:pol_cross_appendix} show
cross-language cosine similarity for the three semantic transformations
(Conditionality, Negation, Politeness) under Mean Difference Vectors (MDV),  Procrustes alignment, and RISE.

\begin{figure*}[htbp]
    \centering
    \begin{subfigure}{0.32\textwidth}
        \centering
        \includegraphics[width=\linewidth]{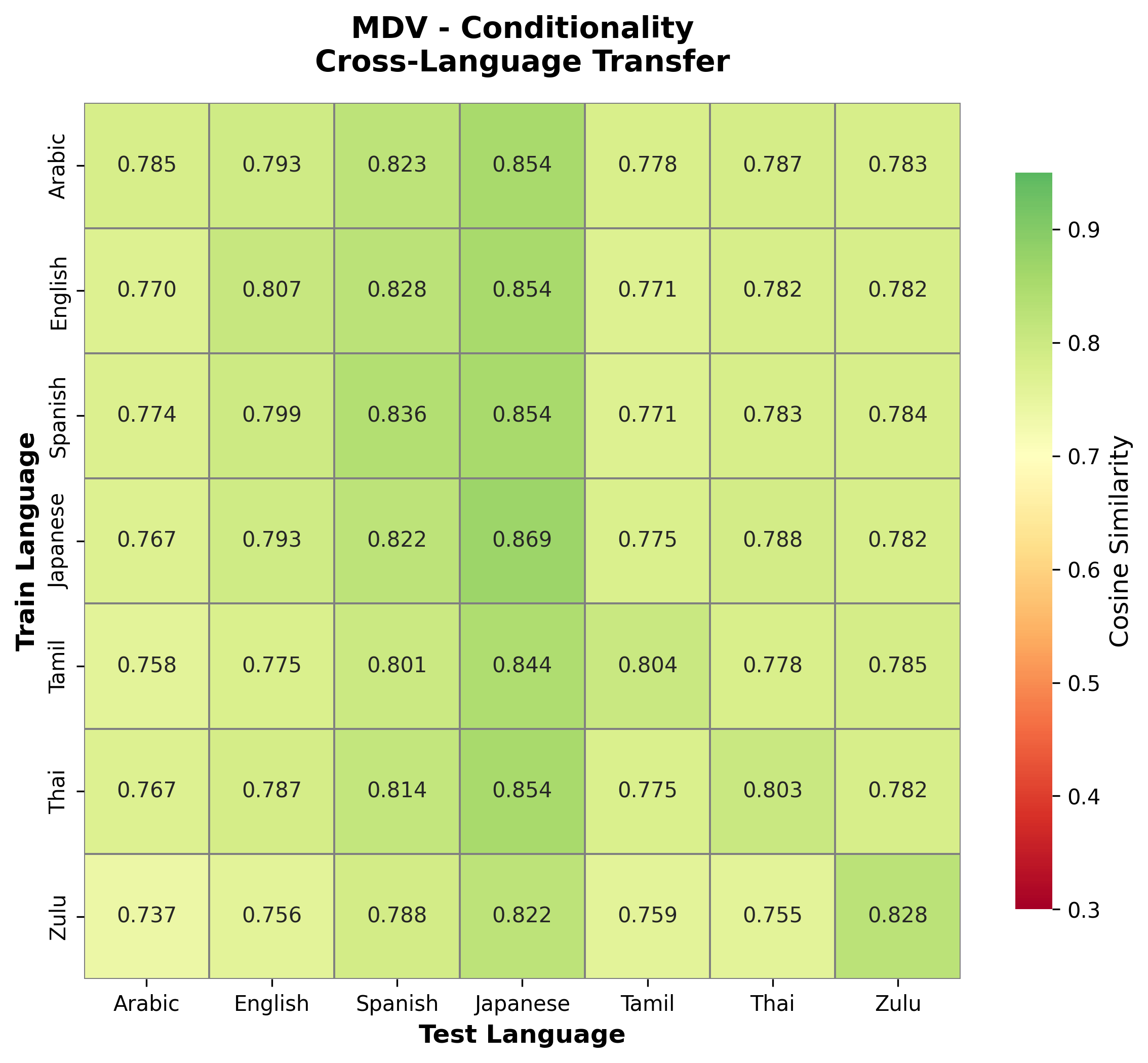}
        \caption{MDV}
    \end{subfigure}
    \hfill
    \begin{subfigure}{0.32\textwidth}
        \centering
        \includegraphics[width=\linewidth]{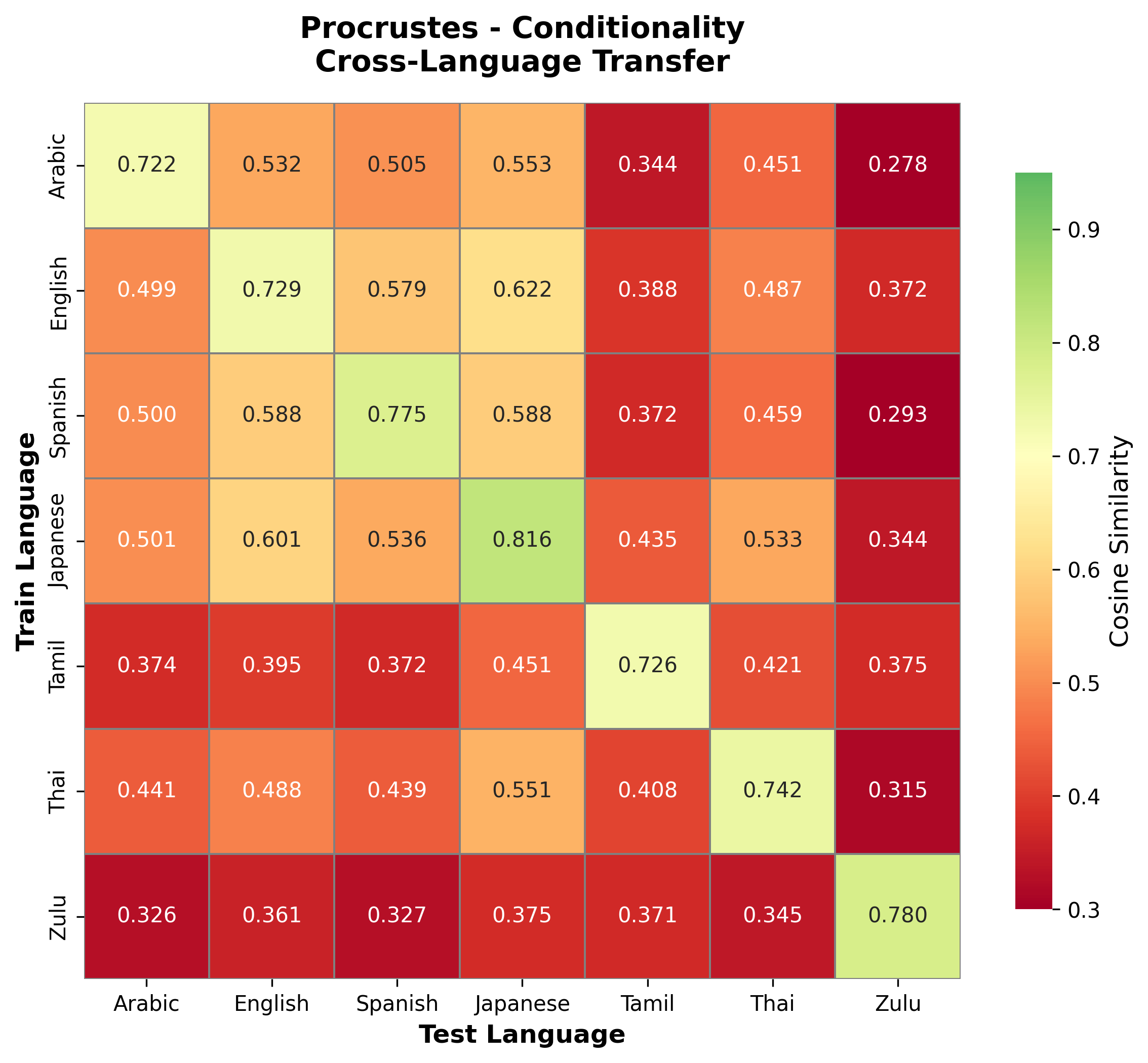}
        \caption{Procrustes}
    \end{subfigure}
    \hfill
    \begin{subfigure}{0.32\textwidth}
        \centering
        \includegraphics[width=\linewidth]{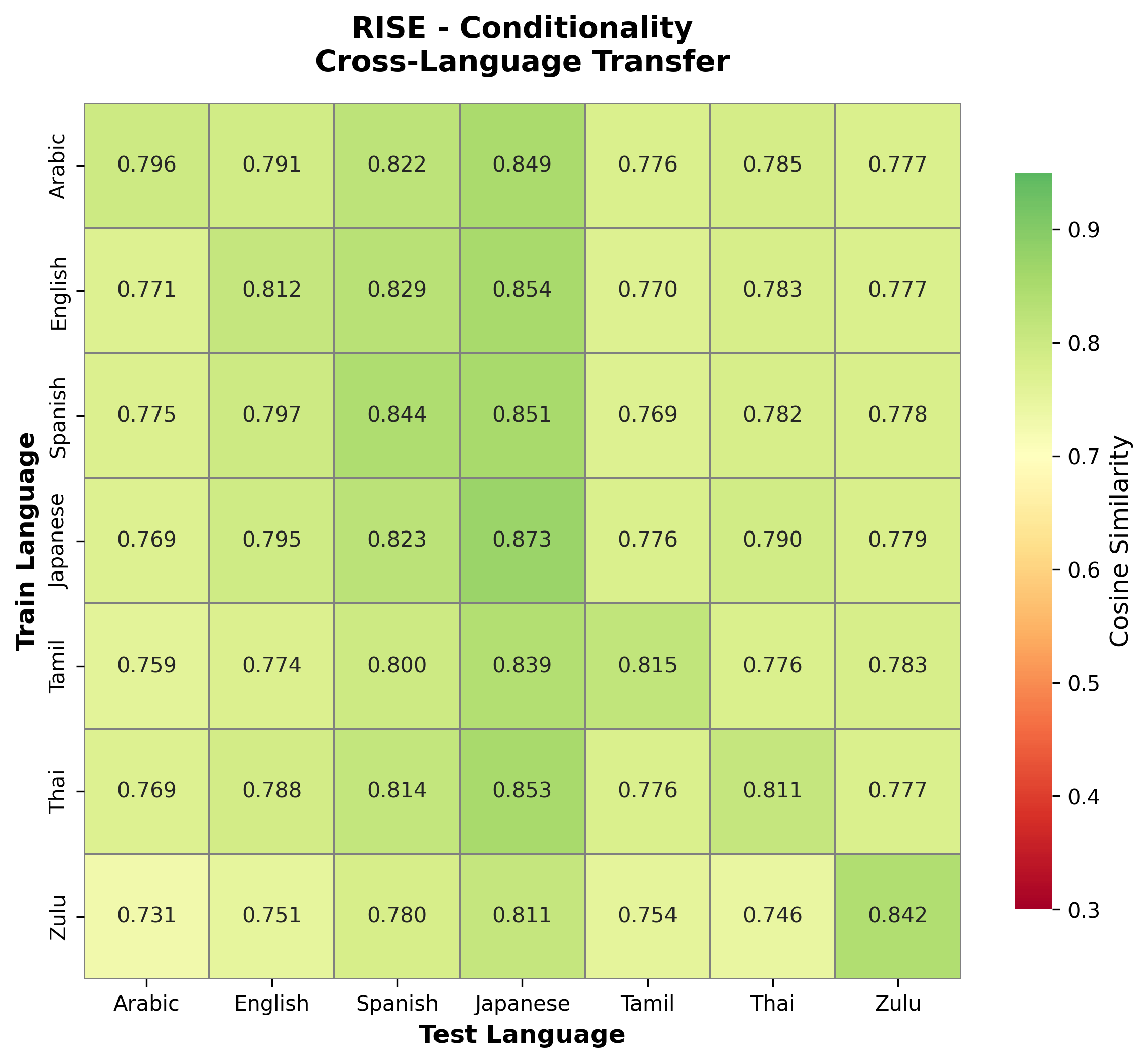}
        \caption{RISE}
    \end{subfigure}
    \caption{Cross-language transfer for \textbf{Conditionality} across seven languages.}
    \label{fig:cond_cross_appendix}
\end{figure*}

\begin{figure*}[htbp]
    \centering
    \begin{subfigure}{0.32\textwidth}
        \centering
        \includegraphics[width=\linewidth]{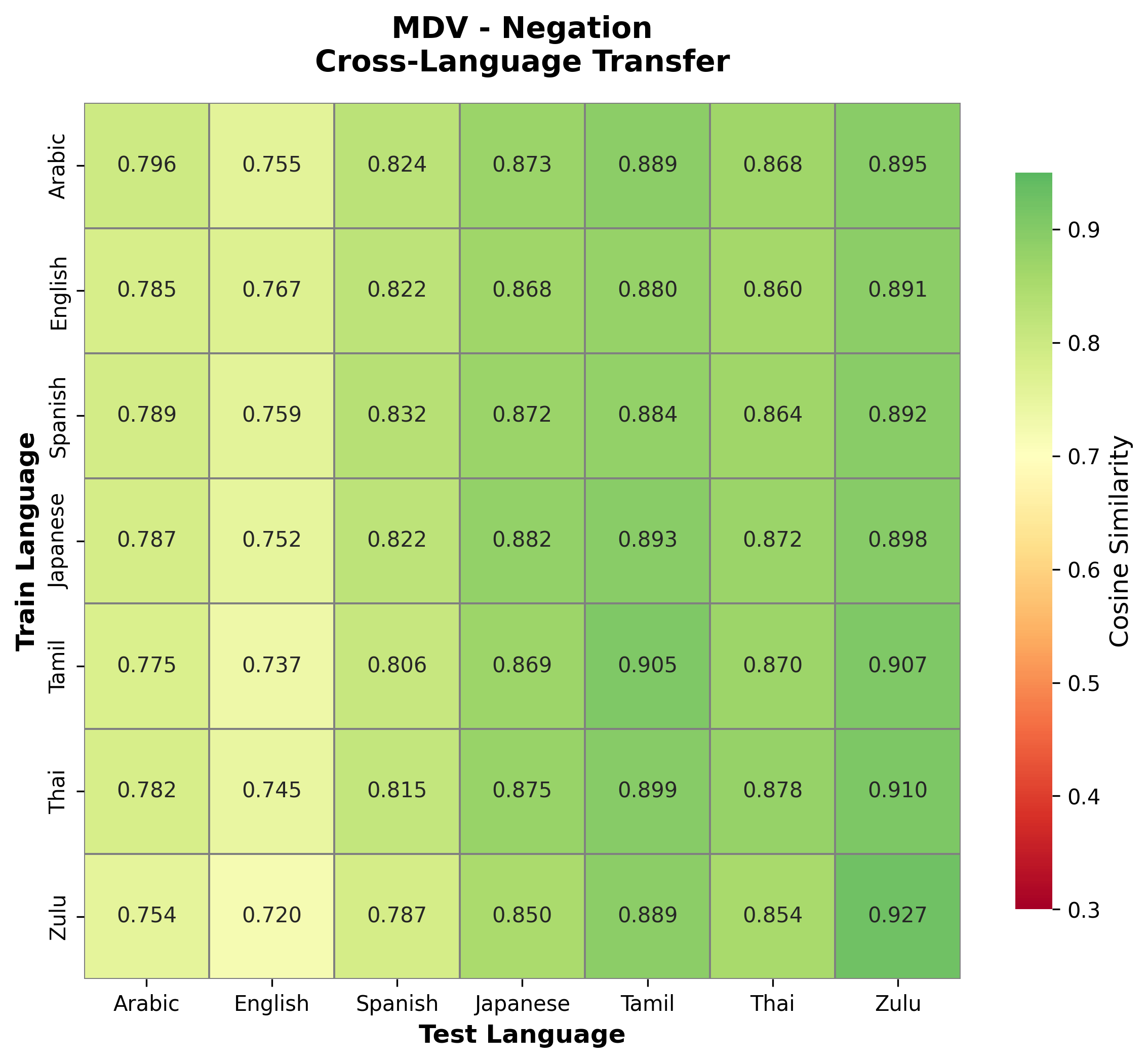}
        \caption{MDV}
    \end{subfigure}
    \hfill
    \begin{subfigure}{0.32\textwidth}
        \centering
        \includegraphics[width=\linewidth]{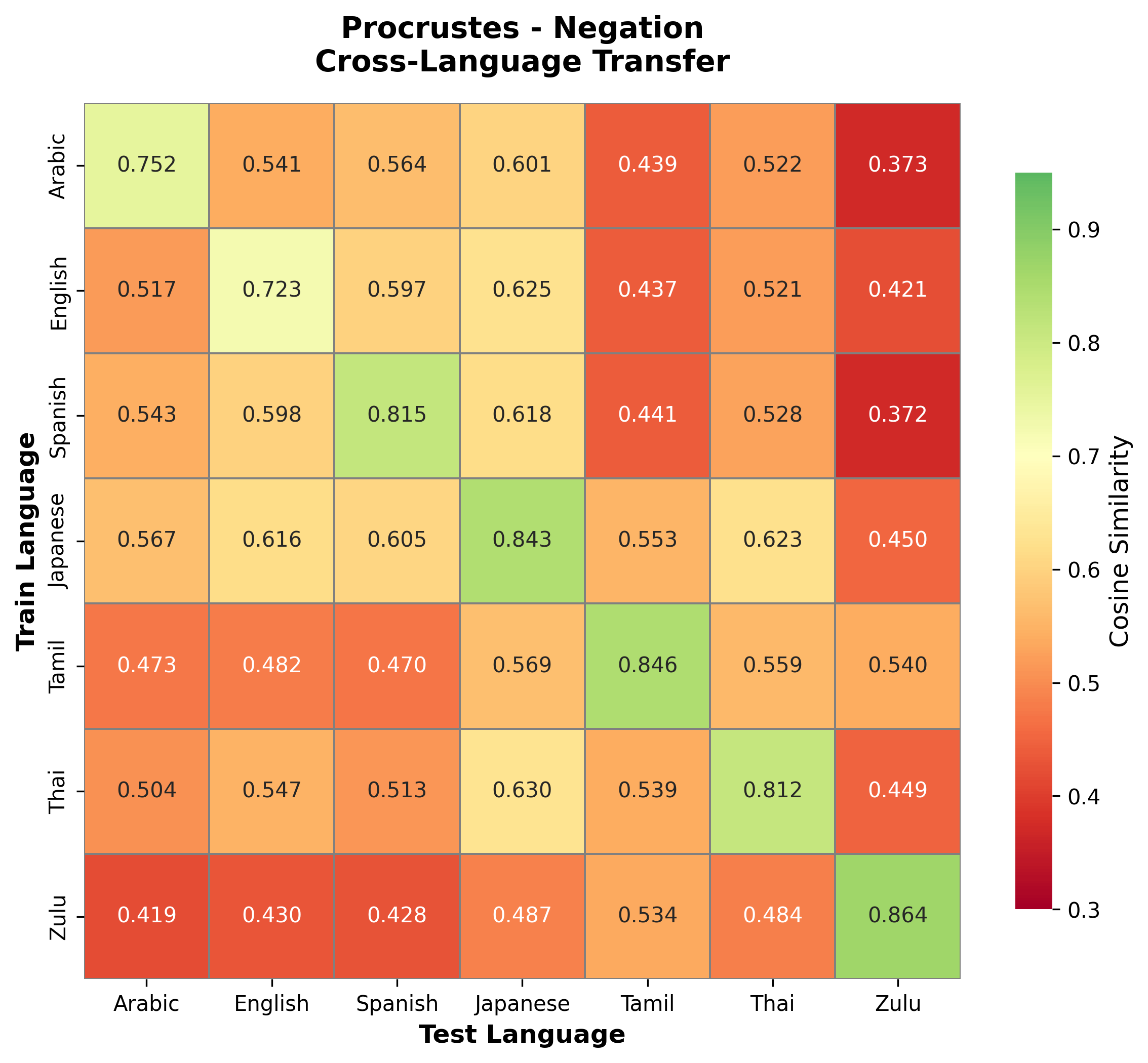}
        \caption{Procrustes}
    \end{subfigure}
    \hfill
    \begin{subfigure}{0.32\textwidth}
        \centering
        \includegraphics[width=\linewidth]{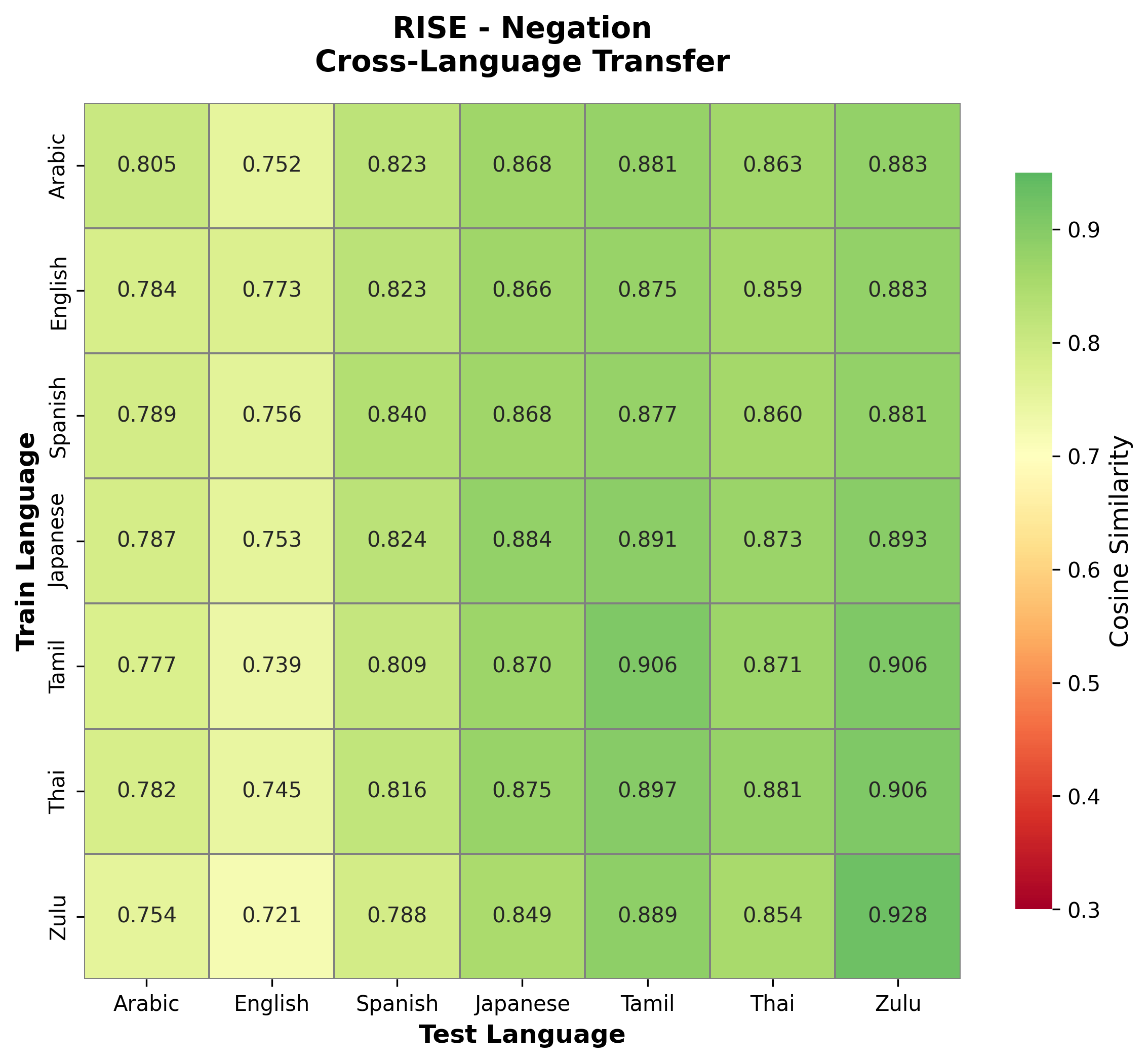}
        \caption{RISE}
    \end{subfigure}
    \caption{Cross-language transfer for \textbf{Negation} across seven languages.}
    \label{fig:neg_cross_appendix}
\end{figure*}

\begin{figure*}[htbp]
    \centering
    \begin{subfigure}{0.32\textwidth}
        \centering
        \includegraphics[width=\linewidth]{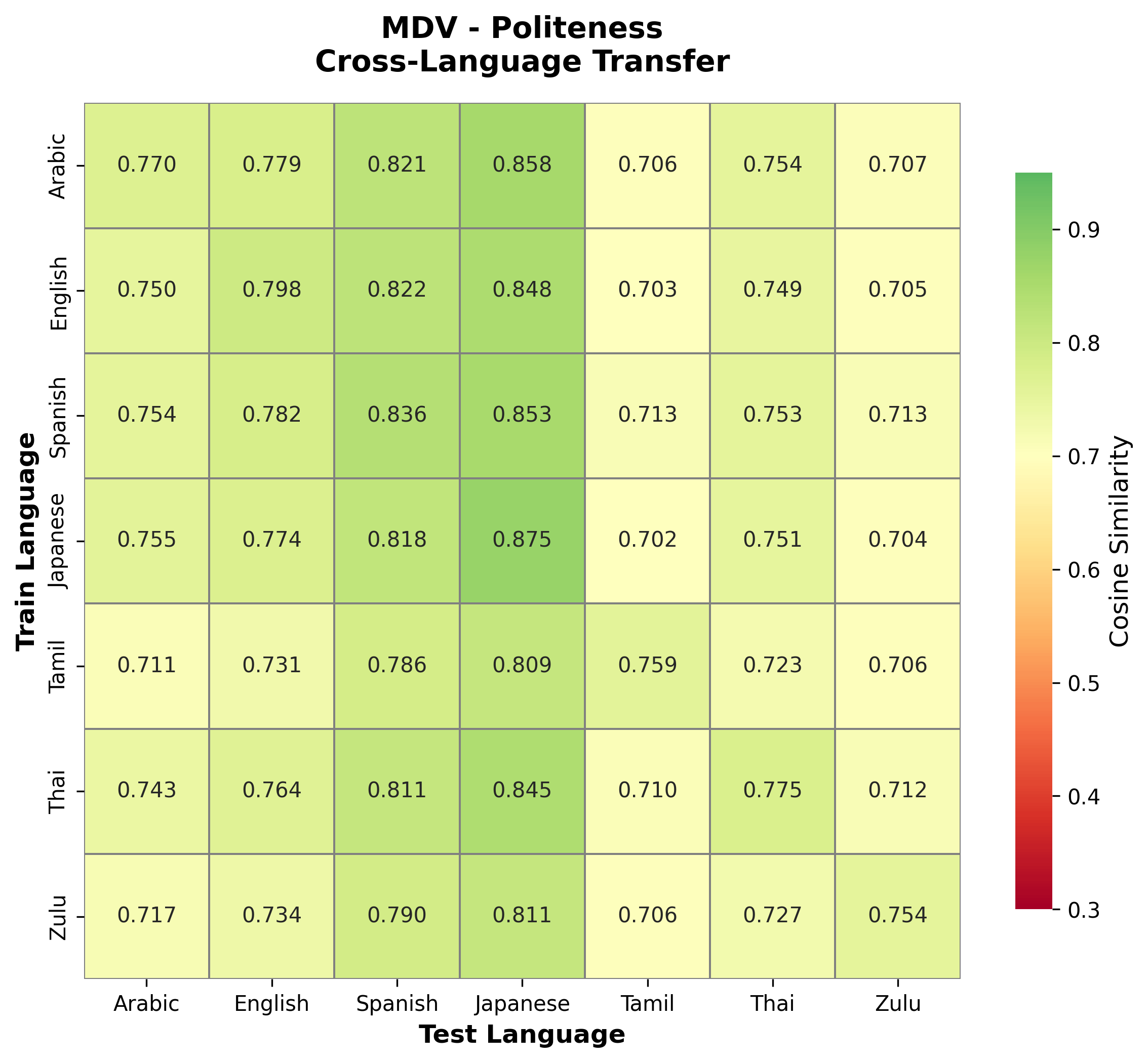}
        \caption{MDV}
    \end{subfigure}
    \hfill
    \begin{subfigure}{0.32\textwidth}
        \centering
        \includegraphics[width=\linewidth]{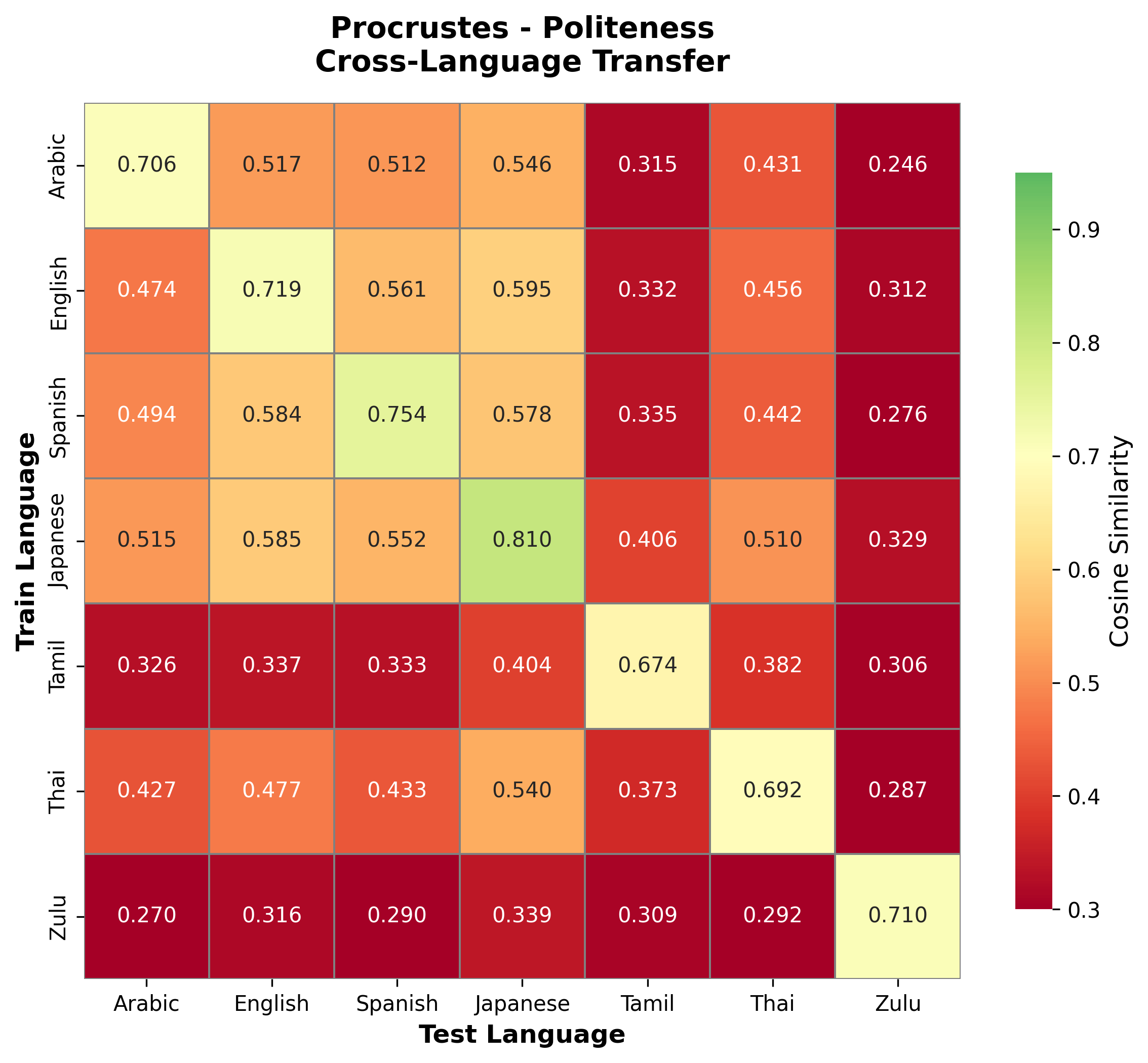}
        \caption{Procrustes}
    \end{subfigure}
    \hfill
    \begin{subfigure}{0.32\textwidth}
        \centering
        \includegraphics[width=\linewidth]{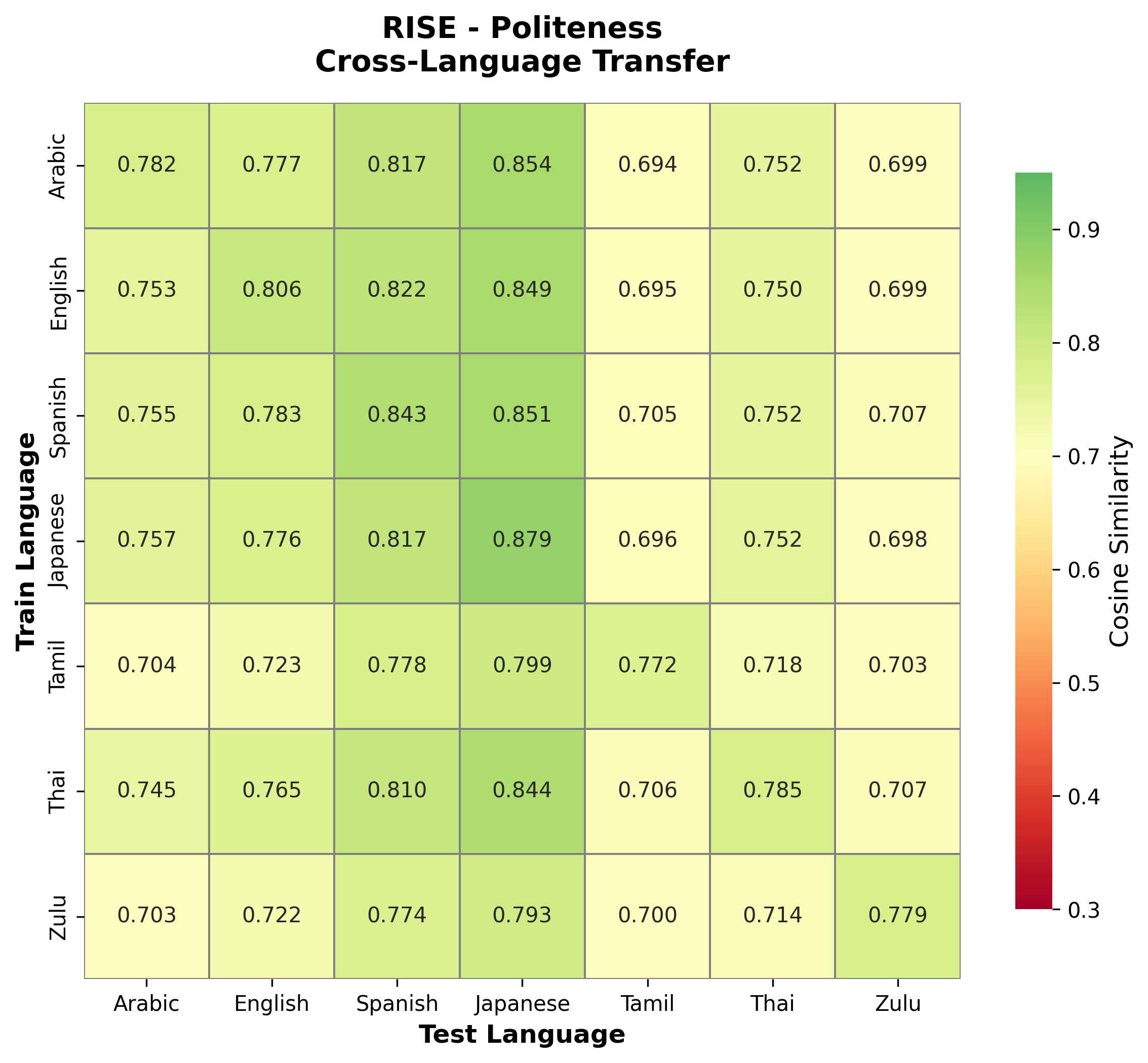}
        \caption{RISE}
    \end{subfigure}
    \caption{Cross-language transfer for \textbf{Politeness} across seven languages.}
    \label{fig:pol_cross_appendix}
\end{figure*}

\subsection{Natural-Language Validation: BLiMP and SICK}
\label{app:BLiMP_sick}

Figure~\ref{fig:BLiMP_sick_appendix} reports mean cosine similarity on BLiMP
(syntactic) and SICK (semantic) for the three methods.

\begin{figure}[htbp]
    \centering
    \includegraphics[width=\linewidth]{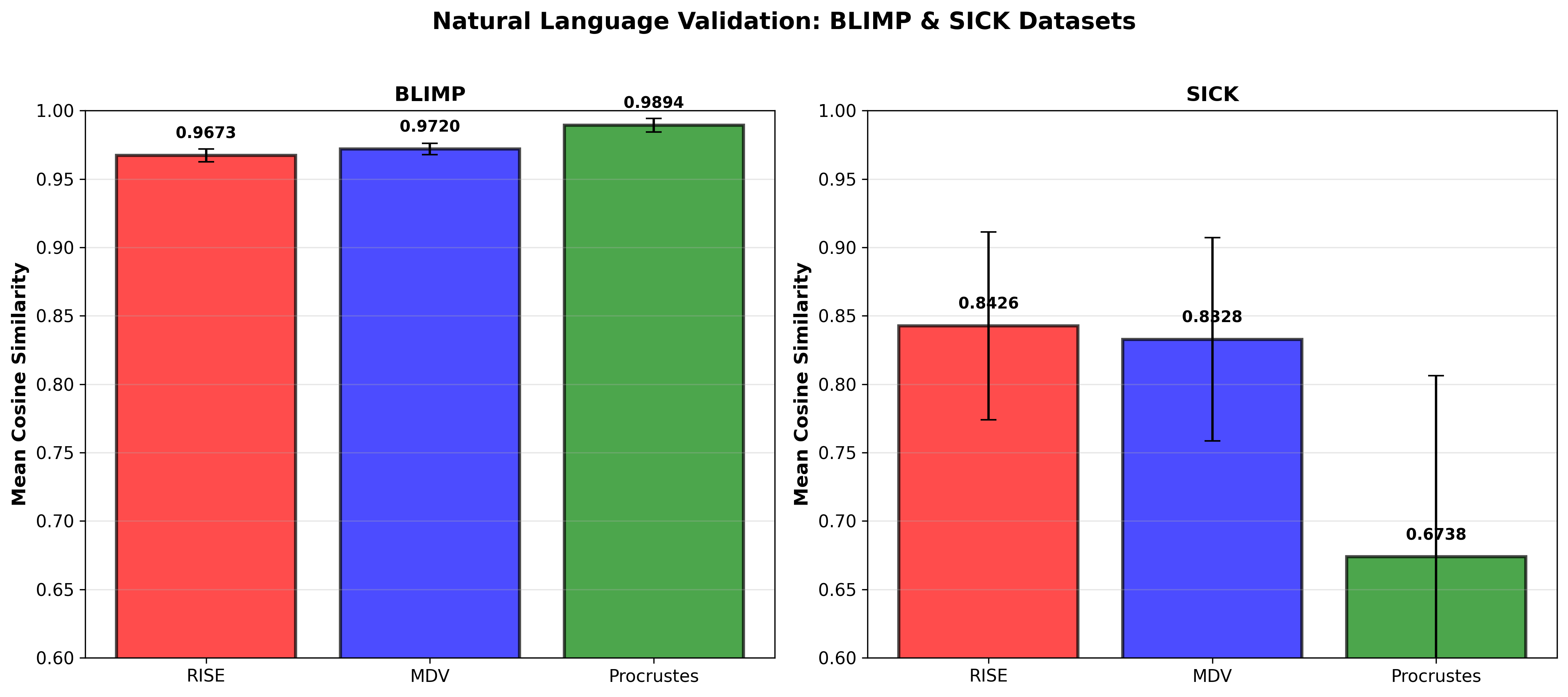}
    \caption{Natural language validation on BLiMP (syntactic acceptability) and
    SICK (semantic relatedness) for RISE, MDV, and Procrustes. Error bars denote
    standard deviation across examples.}
    \label{fig:BLiMP_sick_appendix}
\end{figure}

\newpage
\section{Prompt Templates}
\label{app:prompts}

We provide the exact prompt templates used to generate neutral sentences and their semantic variants. Each template is shown in monospace using the \texttt{lstlisting} environment for clarity and reproducibility.

\subsection{Neutral Sentence Generation}
\begin{lstlisting}[language={},basicstyle=\ttfamily\small,breaklines=true]
You are a linguistics assistant. Generate ONE terse, blunt English
sentence that is politeness-neutral: it must be neither explicitly
polite nor impolite. Keep it concise (8 to 12 words), direct, and
free of polite markers such as "please", honorifics, hedging,
or apologies, yet ensure it is not rude. If the situation contains
a placeholder (e.g., "a favor", "a cultural practice"), replace
it with a concrete, plausible example.

Context category: {category}
Detailed situation: {example}

Respond with ONLY the single sentence (no explanations, no quotation marks).
\end{lstlisting}

\subsection{Politeness Rephrasing}
\begin{lstlisting}[language={},basicstyle=\ttfamily\small,breaklines=true]
You are an expert translator and pragmatics specialist. Rewrite the
following sentence in {language_name} to make it more POLITE while
preserving its original meaning. Incorporate the given politeness
features.

Sentence: "{sentence}"

Politeness features (JSON): {features_json}

Respond ONLY with a JSON object in the exact format:
{"polite": "<rewritten sentence>"}
Do NOT add any other keys, explanations, or markdown.
\end{lstlisting}

\subsection{Negation}
\begin{lstlisting}[language={},basicstyle=\ttfamily\small,breaklines=true]
You are an expert translator and semantics specialist. Rewrite the
following sentence in {language_name} so that it expresses the
NEGATION of its original meaning while remaining natural and fluent.
Incorporate the given negation features.

Sentence: "{sentence}"

Negation features (JSON): {features_json}

Respond ONLY with a JSON object in the exact format:
{"negation": "<rewritten sentence>"}
Do NOT add any other keys, explanations, or markdown.
\end{lstlisting}

\subsection{Conditionality}
\begin{lstlisting}[language={},basicstyle=\ttfamily\small,breaklines=true]
You are an expert translator and syntax/pragmatics expert. Rewrite
the following sentence in {language_name} so that the statement
becomes CONDITIONAL (i.e., it only holds under a certain condition)
while preserving overall meaning and sounding natural. Incorporate
the provided conditionality features.

Sentence: "{sentence}"

Conditionality features (JSON): {features_json}

Respond ONLY with a JSON object in the exact format:
{"conditionality": "<rewritten sentence>"}
Do NOT add any other keys, explanations, or markdown.
\end{lstlisting}


\section{Data Generation Methodology}
\label{app:data-generation}

\subsection{Diversity Controls} To guard against artifacts that might arise from narrow lexical or topical coverage, we apply several sampling diversity control. (1) Each neutral sentence prompt draws its situation description from a randomly chosen context category and exemplar, yielding a wide topical spread before any transformation is applied. (2) Within every language we shuffle sentence–feature assignments so that no specific lexical field correlates with a particular transformation subtype. (3) For each transformation we uniformly sample property values (e.g., negation particle, politeness strategy) per language and sentence, guaranteeing that every combination of language and subtype appears the same number of times. (4) After generation we remove near-duplicates and enforce a 5–25 token length window, which empirically yields a near-uniform length distribution. Together these steps ensure that our corpus varies in topic, syntax, and lexical choice while remaining balanced across languages and transformation subtypes.
These controls ensure that observed geometric patterns reflect semantic properties rather than artifacts of lexical choice or sentence structure.
\begin{enumerate}
    \item \textbf{Topical Diversity:} Neutral sentences were drawn from varied context categories (social interactions, factual statements, requests, etc.)
    \item \textbf{Feature Balance:} Transformation features (e.g., negation particles, politeness strategies) were uniformly sampled to prevent correlation with specific lexical fields.
    \item \textbf{Length Normalization:} Sentences were filtered to 5-25 tokens to ensure comparable embedding properties.
    \item \textbf{Deduplication:} Near-duplicate outputs were removed to prevent repeated data.
\end{enumerate}

\subsection{Feature-based Transformation Methodology}

We generated sentence pairs systematically by first sampling neutral sentences in seven typologically diverse languages (English, Spanish, Tamil, Thai, Arabic, Japanese, and Zulu), and subsequently transforming each sentence using feature-controlled prompts. Each transformation was guided by uniformly sampling linguistic features from a predefined typological metadata set (illustrated below).

The full inventories of typological properties for politeness, negation, and conditionality are provided in Tables~\ref{tab:politeness_features}--\ref{tab:conditionality_features}.

\begin{table*}[t]
\centering
\small
\begin{tabular}{lp{3cm}p{4cm}}
\hline
\textbf{Language} & \textbf{Strategy Type} & \textbf{Grammatical/Lexical Devices} \\
\hline
English & Negative politeness & Modal conditional, hedging, idiomatic/proverbial, taboo avoidance \\
Spanish & Positive politeness & Modal conditional, morphological politeness, hedging, idiomatic/proverbial \\
Tamil & Relational/Kinship politeness & Morphological politeness \\
Thai & Positive politeness; Relational/Kinship & Morphological politeness, modal conditional \\
Arabic & Positive politeness; Relational/Kinship & Modal conditional, morphological politeness, idiomatic/proverbial \\
Japanese & Relational/Kinship politeness & Morphological politeness, modal conditional, hedging \\
Zulu & Relational/Kinship politeness & Morphological politeness \\
\hline
\end{tabular}
\caption{Typological features sampled uniformly for politeness transformations.}
\label{tab:politeness_features}
\end{table*}

\begin{table*}[t]
\centering
\small
\begin{tabular}{lp{2.2cm}p{4cm}}
\hline
\textbf{Language} & \textbf{Marker Position} & \textbf{Morphological Realization} \\
\hline
English & Clause-medial & Negative particle; negative aux/modal; negative affix \\
Spanish & Clause-medial; concord & Negative particle \\
Tamil & Clause-final & Negative particle; verb-internal negation \\
Thai & Clause-medial & Negative particle \\
Arabic & Clause-initial / medial & Negative particle; negative affix \\
Japanese & Clause-final & Verb-internal negation \\
Zulu & Clause-medial & Negative particle \\
\hline
\end{tabular}
\caption{Typological features sampled uniformly for negation transformations.}
\label{tab:negation_features}
\end{table*}

\begin{table*}[t]
\centering
\small
\begin{tabular}{lp{2.2cm}p{4cm}}
\hline
\textbf{Language} & \textbf{Clause Structure} & \textbf{Morphological Marking} \\
\hline
English & Initial; final; embedded & Explicit marker; conditional tense/aspect \\
Spanish & Initial; final; embedded & Conditional mood; explicit marker \\
Tamil & Final & Explicit marker; conditional mood \\
Thai & Initial & Explicit marker \\
Arabic & Initial; final & Conditional mood; explicit marker \\
Japanese & Final; embedded & Conditional mood; explicit marker \\
Zulu & Initial & Conditional mood; explicit marker \\
\hline
\end{tabular}
\caption{Typological features sampled uniformly for conditionality transformations.}
\label{tab:conditionality_features}
\end{table*}

\subsubsection{Transformation Procedure}

For each neutral sentence, we uniformly sampled exactly one set of feature values from the typological metadata and prompted the language model (GPT-4.5) to generate the transformed variant adhering to these specifications. By uniformly sampling across multiple typological dimensions—strategy types, morphological realizations, and pragmatic contexts—we ensured comprehensive coverage of each language's linguistic variability. This methodology supports cross-linguistic embedding analysis and ensures that observed embedding-space transformations reflect typological distinctions accurately.

\subsection{Feature-Controlled Prompting}

To generate each transformation in a systematic and reproducible manner, we employ a feature-controlled prompting strategy with a large language model (LLM). Each prompt is carefully templated to specify the source language, the desired transformation type, and a set of fine-grained feature tags that guide the model's output. For example, a prompt might indicate the language code (``[TA]'' for Tamil), the transformation (``Politeness Rephrase''), and a particular strategy or keyword (such as ``add honorific'') relevant to that transformation. By explicitly encoding these features, we ensure that the LLM produces the intended variation—whether a more polite rephrasing, a negated statement, or a conditional construction—in a consistent and transparent way.

To further guarantee balanced coverage, we maintain a metadata table that enumerates all possible sub-types or strategies for each transformation. This enables us to stratify the sampling of transformation features across languages and sentences, ensuring that every variant type is equally represented. For instance, multiple politeness strategies (e.g., adding honorifics, using indirect language) or different negation words (``no'' vs. ``not'') are distributed uniformly across the dataset. This controlled coverage is critical for fair comparisons: it prevents any language from being overrepresented by a particular style of rephrasing or negation, and minimizes inadvertent correlations between language and transformation realization. Our stratified sampling approach follows established principles of controlled experimental design, providing a robust foundation for cross-lingual embedding analysis.

All transformed sentences are generated using a single, consistent LLM (GPT-4.5) with a temperature of 1.0 and a maximum token limit of 128 per prompt. The relatively high temperature encourages diversity in phrasing, while the one-shot generation policy (taking the first model output without retries or manual curation) avoids selection bias. With carefully constructed prompts, the model reliably produces valid transformations on the first attempt, and all outputs remain in the target language specified by the prompt. This procedure ensures that our dataset is both systematically varied and reproducible, supporting rigorous downstream analysis.

\subsection{Quality Control and Deduplication}

To ensure the integrity and uniqueness of our dataset, we implemented a rigorous two-level deduplication process. At the first level, we removed any transformed sentence that was exactly identical to another within the same category and language. This step addresses the possibility that the LLM might produce identical outputs for different inputs, especially for short or formulaic sentences. At the second level, we ensured that each (neutral, variant) pair was unique across the entire dataset. In rare cases where two different source sentences yielded the same transformed output, we treated this as a collision and regenerated a new variant using a slightly altered prompt. Through this process, every neutral sentence in our dataset is paired one-to-one with three distinct transformed sentences (one per transformation type), with no overlaps. The result is a clean set of sentence pairs, each exhibiting a unique, transformation-driven difference.

Beyond deduplication, we applied a suite of diversity controls to guard against artifacts arising from narrow lexical or topical coverage. Each neutral sentence prompt was drawn from a wide range of context categories and exemplars, ensuring topical breadth before any transformation was applied. Within each language, we shuffled sentence–feature assignments so that no specific lexical field correlated with a particular transformation subtype. For each transformation, we uniformly sampled property values (such as negation particles or politeness strategies) per language and sentence, guaranteeing that every combination of language and subtype appeared the same number of times. After generation, we removed near-duplicates and enforced a 5–25 token length window, which empirically yielded a near-uniform length distribution. Together, these steps ensure that our corpus varies in topic, syntax, and lexical choice while remaining balanced across languages and transformation subtypes, providing a robust foundation for subsequent embedding analysis.

\subsection{Embedding Generation}

With our dataset of neutral and transformed sentences in hand, we next obtain high-dimensional vector representations using a state-of-the-art multilingual sentence encoder. Specifically, we employ OpenAI's \texttt{text-embedding-3-large} model, which produces 3072-dimensional embeddings aligned semantically across more than 90 languages.\footnote{\url{https://platform.openai.com/docs/guides/embeddings}} All embeddings are generated in a frozen (non-fine-tuned) setting, with a single API call per sentence. According to the model card, each sentence embedding is computed by mean-pooling the token-level hidden states, followed by layer normalization. This means that every token—including short functional items like negation particles—contributes proportionally to the final vector.

Our approach assumes that all sentence embeddings reside in a shared semantic space where linear structure is meaningful. We adopt the perspective that this space forms a latent manifold encoding universal semantic features, as hypothesized by \citet{Jha2025}. In this framework, certain directions in the embedding space correspond to specific attributes, such as politeness or negation. If sentence transformations truly correspond to adding or subtracting a semantic attribute, we expect the difference vector (variant minus source) to be relatively consistent across examples. This aligns with the ``universal geometry for embeddings'' framework, in which multilingual embeddings from different models or languages can be brought to a common representation where semantic differences are captured by geometric translations. While our work stays within a single encoder's space, we leverage a similar idea: analyzing whether the transformation ``rotors'' (difference vectors) cluster for similar transformations across languages. This methodology sets the stage for validating whether these quasi-linear transformations indeed behave like translations in a Riemannian semantic space \citep{Jha2025}, which we explore in the next section via rotor-based analysis of the embedding differences.

It is important to note that applying a single global rotation or principal component analysis (PCA) can distort other dimensions and is not adaptive to individual vectors. Because the base embedding is already a mean across tokens, edits that insert or replace a handful of tokens translate to small but coherent rotations of the global vector—precisely the kind of local, content-independent shift that our rotor method is designed to capture.

\subsection{Final Dataset Statistics}

The resulting corpus comprises 1,000 neutral sentences in each of the seven languages, totaling 7,000 examples. For English neutral sentences, the mean token length is 9.1 tokens (with a median of 9.0 tokens), with token counts ranging from three to 12 tokens and an average character length of 54.4 characters. This distribution confirms that our generation process produced concise, natural sentences suitable for semantic transformation analysis across languages and transformation types.

To further validate the diversity and balance of our dataset, we analyzed the distribution of sentence lengths per language, which reveals broadly similar profiles with a peak around 10–15 tokens. Additionally, we examined the distribution of word frequencies, confirming a typical long-tail distribution in each language. These statistics affirm that our corpus is both balanced and rich in content, providing a solid empirical foundation for the cross-lingual transformation analysis in the subsequent sections.

\section{LLM Usage Disclosure}
\label{app:LLM_usage}

Large language models (LLMs) were used to assist with multiple aspects of this research, including: ideation, writing, programming, and implementation of experimental code, and identification of related work and literature. 
All LLM-generated content, code, and references were subject to human review, testing, and verification to ensure accuracy, functionality, and relevance. 
Any claims, results, experimental implementations, and citations presented in this work have been reviewed by the authors. 
The authors take responsibility for all content, including any errors or inaccuracies that may remain despite our review process.


\section{Downstream Task Analysis}
\label{app:downstream}
\rev{As requested by reviewers, we completed a downstream classification analysis. Due to time constraints, we focused on a single well-defined task: detecting negation in the English subset of the Synthetic Multilingual dataset. We evaluated how well a classifier trained on MDV-transformed and RISE-transformed sentences performed on a held-out test set of 1919 unpaired sentences (961 with negation, 958 without). The test set was generated with the same specifications described in Appendix \ref{app:prompts} \& \ref{app:data-generation}.} 

\rev{Now, both methods perform well on this task. MDV achieves strong recall (92.1\%) and overall accuracy (87.2\%), showing that even a simple mean displacement vector captures meaningful geometric regularities in the transformation. Yet, RISE yields a stronger downstream performance and outperforms MDV across all metrics (93.0\% accuracy, 92.1\% precision, 94.0\% recall, and 93.0\% F1). The positive results of both methods reinforces the broader claim that spherical, non-linear techniques are effective tools for capturing semantic-syntactic transformations in high-dimensional embedding spaces. }

\begin{table}[h]
\centering
\small
\begin{tabular}{lcccc}
\toprule
\textbf{Method} & \textbf{Accuracy} & \textbf{Precision} & \textbf{Recall} & \textbf{F1} \\
\midrule
MDV  & 0.872 & 0.840 & 0.921 & 0.878 \\
RISE & \textbf{0.930} & \textbf{0.921} & \textbf{0.940} & \textbf{0.930} \\
\bottomrule
\end{tabular}
\caption{\rev{Downstream negation classification performance for MDV and RISE transformations.}}
\label{tab:negation_classification}
\end{table}


\section{RISE vs Random Baseline Comparisons}
\label{app:rise_baselines}
This section presents comprehensive comparisons between RISE and random baseline prototypes to validate that RISE learns meaningful semantic directions rather than benefiting from arbitrary vector orientations. The following figures show detailed graphs, heatmaps, and tables comparing RISE performance against random prototypes of equivalent magnitude across all language pairs and phenomena. Each comparison uses 10,000 random trials to ensure statistical robustness.

Figure \ref{fig:rise_vs_random_comparisons} highlights three select language transfer scenarios and Figures \ref{fig:rise_vs_random_text_embedding_3_large}--\ref{fig:rise_vs_random_mbert} demonstrate the baseline validity of RISE by comparing it against random prototypes across multiple language transfer scenarios. The consistent and substantial advantages (ranging from 5.1× to 26.2×) across all models and phenomena provide crucial validation that RISE learns meaningful semantic directions rather than exploiting statistical artifacts. Notably, cross-language transfers often maintain or even exceed monolingual performance relative to random baselines, confirming that RISE captures universal semantic patterns that generalize across language boundaries.
Overall, RISE analyses show that embedding models encode some transformations as universal operators, but others remain highly culture- and resource-dependent. 
Future research should refine evaluation benchmarks to account for phenomenon-specific variability and investigate training regimes that promote balanced universality without sacrificing discriminative capacity.

\begin{figure}[htbp]
\caption{RISE vs Random Baseline Comparisons across select language transfer scenarios. 
\\
\textbf{Top:} English monolingual analysis.
\\
\textbf{Middle:} English prototype → Spanish target cross-language transfer.
\\
\textbf{Bottom:} Japanese prototype → English target cross-language transfer.}
\label{fig:rise_vs_random_comparisons}
\centering
\includegraphics[width=0.85\textwidth]{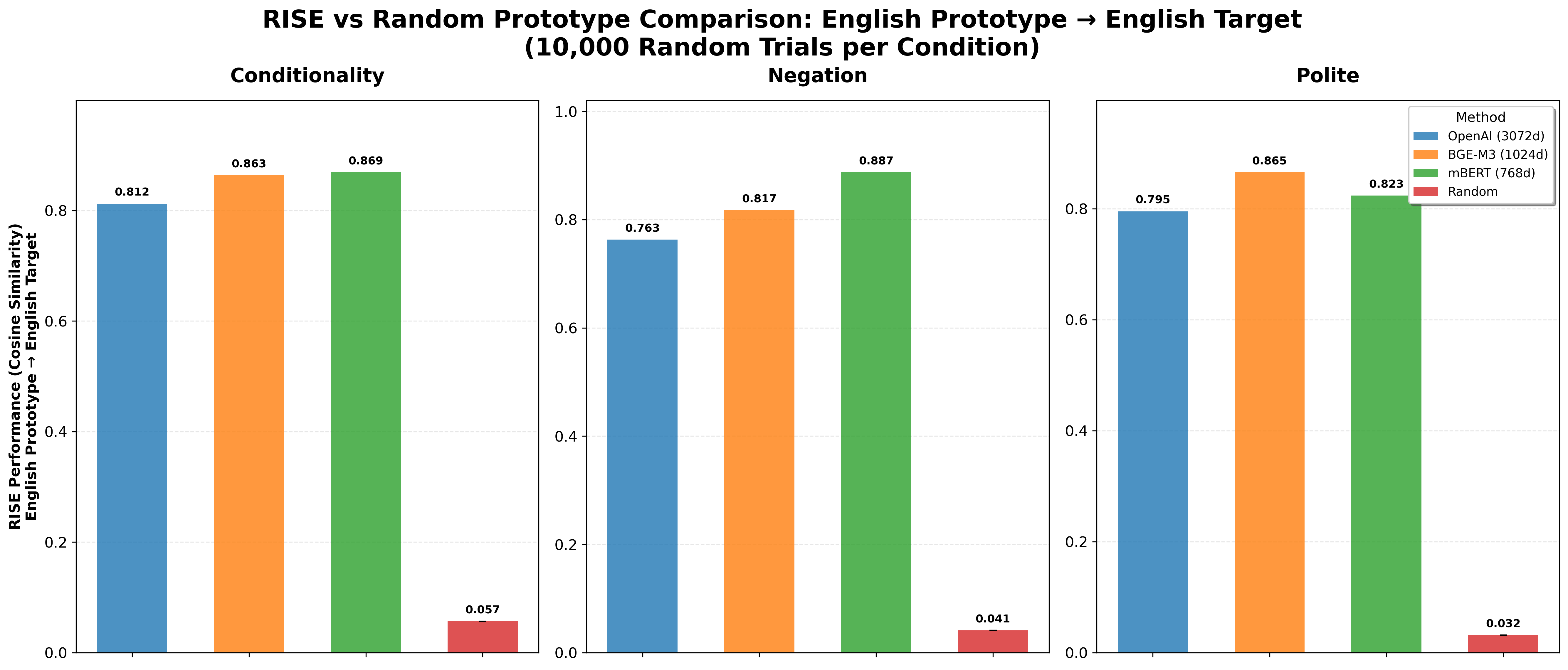}\\[0.5em]
\includegraphics[width=0.85\textwidth]{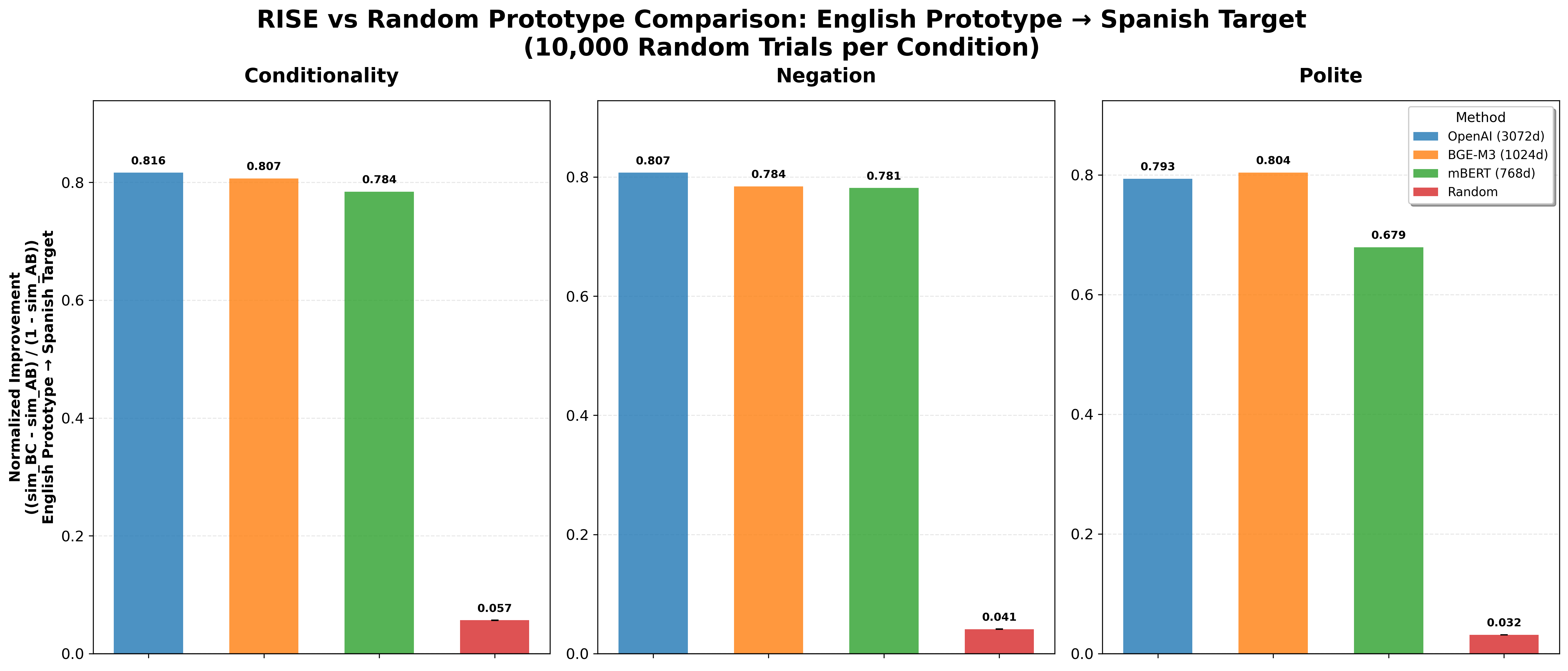}\\[0.5em]
\includegraphics[width=0.85\textwidth]{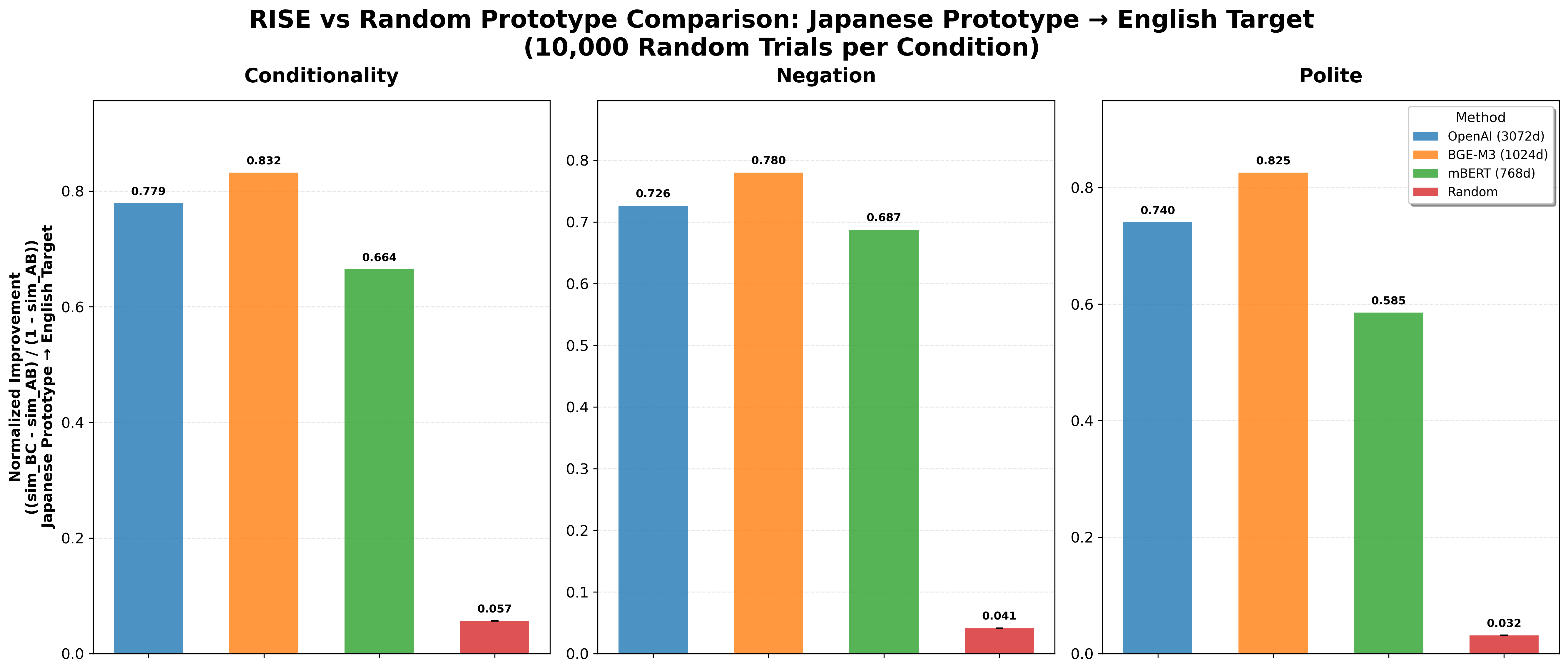}
\end{figure}

\begin{figure}[htbp]
\caption{RISE vs Random Baseline Comparison for text-embedding-3-large. Top row shows RISE performance, bottom row shows random baseline performance (averaged over 10,000 trials). The dramatic performance gap demonstrates that RISE learns meaningful semantic directions rather than benefiting from arbitrary vector orientations.}
\label{fig:rise_vs_random_text_embedding_3_large}
\centering
\includegraphics[width=\textwidth]{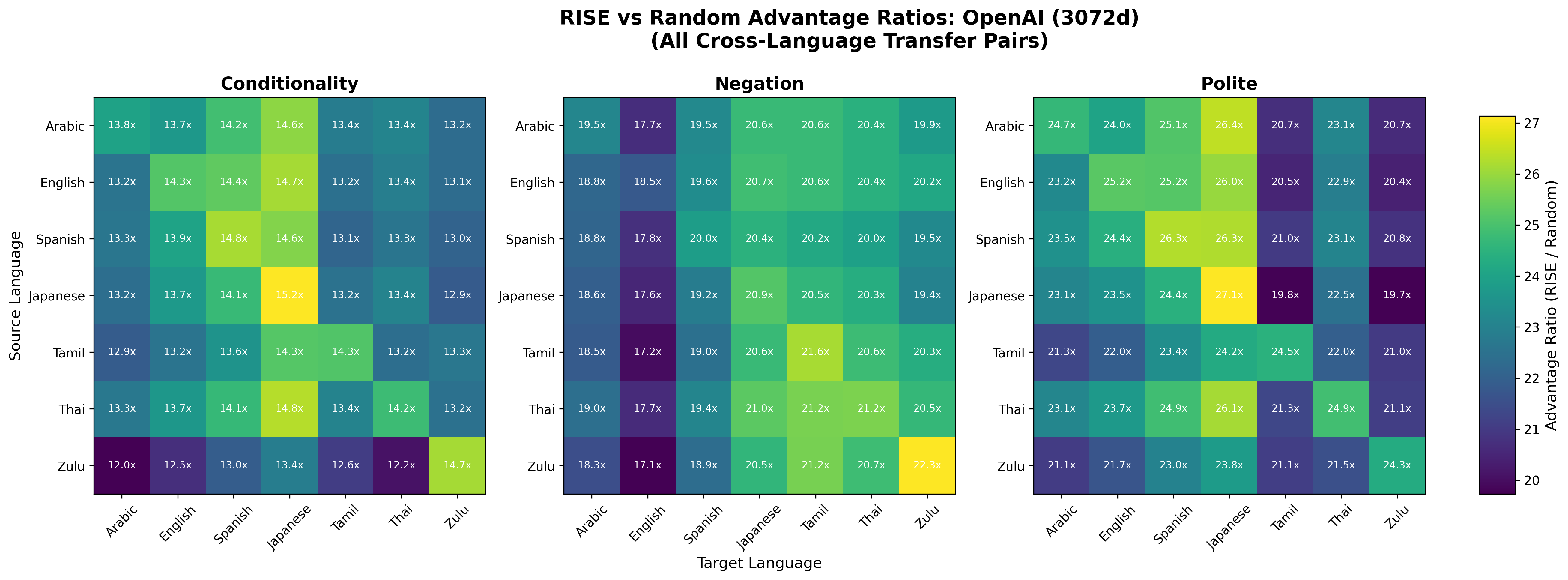}
\end{figure}

\begin{figure}[htbp]
\caption{RISE vs Random Baseline Comparison for bge-m3. Top row shows RISE performance, bottom row shows random baseline performance (averaged over 10,000 trials). Bge-m3 shows remarkably consistent RISE performance across all phenomena and language pairs, with random baselines consistently near zero.}
\label{fig:rise_vs_random_bge_m3}
\centering
\includegraphics[width=\textwidth]{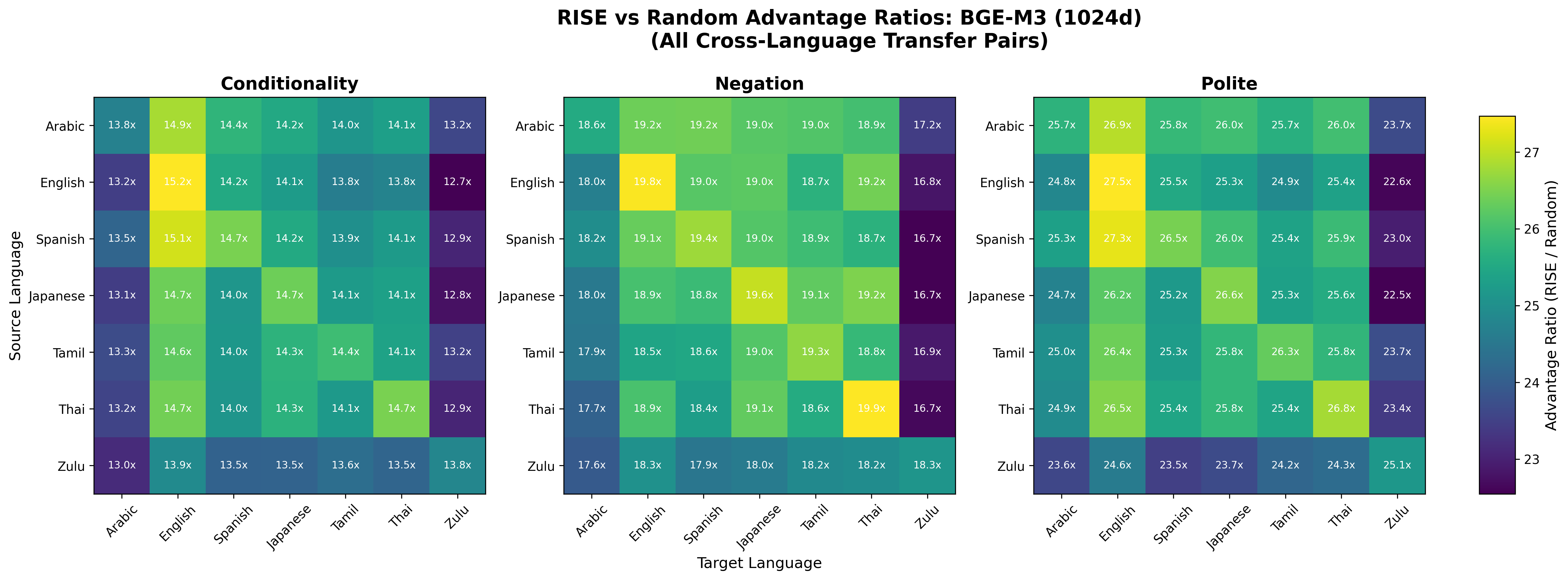}
\end{figure}

\begin{figure}[htbp]
\caption{RISE vs Random Baseline Comparison for mBERT. Top row shows RISE performance, bottom row shows random baseline performance (averaged over 10,000 trials). mBERT demonstrates strong RISE performance for specific phenomena with clear superiority over random baselines across all conditions.}
\label{fig:rise_vs_random_mbert}
\centering
\includegraphics[width=\textwidth]{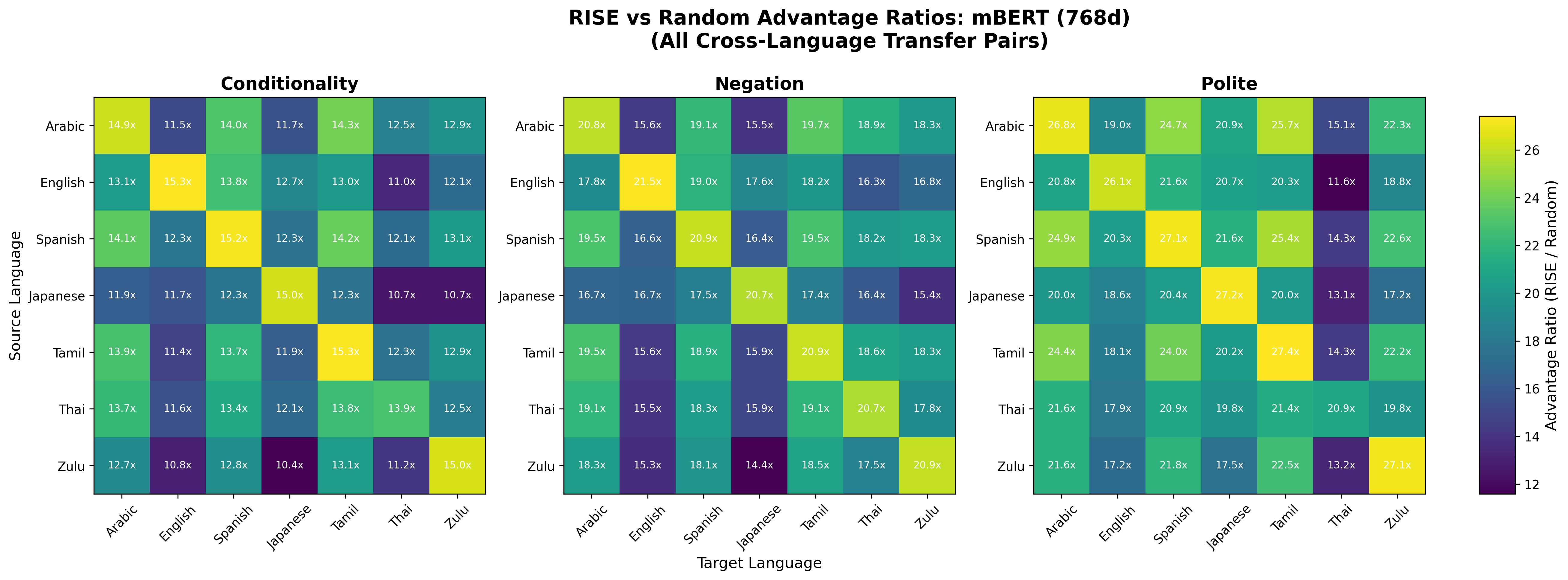}
\end{figure}

\newpage
\subsection{Phenomenon-Specific Performance vs Random Baselines}
Figures \ref{fig:phenomenon_performance_vs_random} provide crucial validation that RISE's strong performance stems from learning meaningful semantic transformations rather than exploiting statistical artifacts or benefiting from arbitrary vector orientations in high-dimensional spaces.

\begin{figure}[H]
\caption{Phenomenon-specific RISE performance vs random baselines across all three models. Shows mean normalized improvement scores for conditionality, negation, and politeness compared to random prototype baselines. Error bars represent standard error of random baseline (10,000 trials). All RISE performance significantly exceeds random baselines, with advantage ratios ranging from 5.1× to 15.2×.}
\label{fig:phenomenon_performance_vs_random}
\centering
\includegraphics[width=0.8\textwidth]{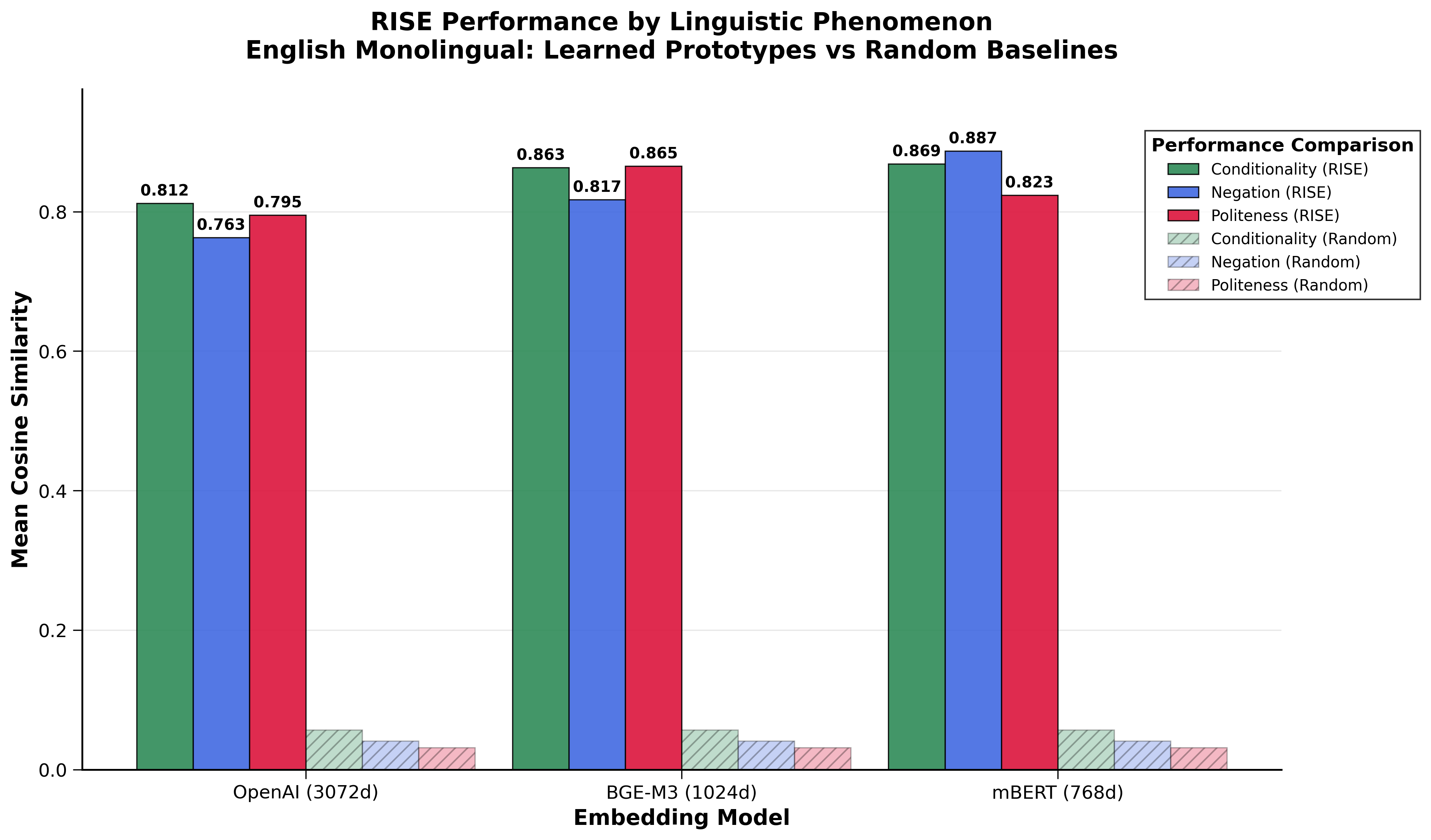}
\end{figure}

\subsection{Detailed Baseline Comparison Analysis}
Tables \ref{tab:rise_vs_random_english}--\ref{tab:phenomenon_analysis} demonstrate the statistical robustness of our findings. All RISE advantages are statistically significant (p $<$ 0.001) with ultra-precise standard errors from 10,000 independent trials. Cross-language transfer often outperforms monolingual scenarios, demonstrating universal semantic patterns learned by RISE across language boundaries.

\begin{table}[htbp]
\centering
\caption{RISE vs Random Prototype Performance: English Monolingual Analysis}
\label{tab:rise_vs_random_english}
\begin{adjustbox}{width=\textwidth,center}
\begin{tabular}{lcccc}
\toprule
\textbf{Model} & \textbf{Phenomenon} & \textbf{RISE Perf} & \textbf{Random Baseline} & \textbf{Adv Ratio} \\
\midrule
\multirow{3}{*}{TE3L (3072d)} 
& Conditionality & 0.463 & 0.057 ± 0.0003 & 8.1× \\
& Negation & 0.210 & 0.041 ± 0.0002 & 5.1× \\
& Politeness & 0.181 & 0.031 ± 0.0002 & 5.8× \\
\midrule
\multirow{3}{*}{bge-m3 (1024d)} 
& Conditionality & 0.610 & 0.057 ± 0.0003 & 10.7× \\
& Negation & 0.391 & 0.041 ± 0.0002 & 9.5× \\
& Politeness & 0.461 & 0.031 ± 0.0002 & 14.9× \\
\midrule
\multirow{3}{*}{mBERT (768d)} 
& Conditionality & 0.625 & 0.057 ± 0.0003 & 11.0× \\
& Negation & 0.624 & 0.041 ± 0.0002 & 15.2× \\
& Politeness & 0.294 & 0.031 ± 0.0002 & 9.5× \\
\bottomrule
\end{tabular}
\end{adjustbox}
\begin{tablenotes}
\small
\item Random baseline computed from 10,000 random prototypes of equivalent magnitude.
\item Standard errors shown for random baselines (±SEM).
\item Adv Ratio = RISE Performance / Random Baseline.
\item All models show significant advantages over random baselines (5.1×--15.2×).
\end{tablenotes}
\end{table}

\begin{table}[htbp]
\centering
\caption{Cross-Language Transfer Performance: RISE vs Random Baselines}
\label{tab:cross_language_transfer}
\begin{tabular}{lccc}
\toprule
\textbf{Transfer Scenario} & \textbf{TE3L (3072d)} & \textbf{bge-m3 (1024d)} & \textbf{mBERT (768d)} \\
\midrule
\multicolumn{4}{c}{\textit{English Prototype → Spanish Target}} \\
\midrule
Conditionality & 14.4× & 14.2× & 13.8× \\
Negation & 19.6× & 19.0× & 19.0× \\
Politeness & 25.2× & 25.5× & 21.6× \\
\midrule
\multicolumn{4}{c}{\textit{Japanese Prototype → English Target}} \\
\midrule
Conditionality & 13.7× & 14.7× & 11.7× \\
Negation & 17.6× & 18.9× & 16.7× \\
Politeness & 23.5× & 26.2× & 18.6× \\
\midrule
\textbf{Cross-Language Average} & 19.0× & 19.8× & 16.9× \\
\textbf{Monolingual Average} & 6.3× & 11.7× & 11.9× \\
\bottomrule
\end{tabular}
\begin{tablenotes}
\small
\item Values show advantage ratios (RISE Performance / Random Baseline).
\item Cross-language transfer often outperforms monolingual scenarios.
\item Demonstrates universal semantic patterns learned by RISE across language boundaries.
\item Random baselines consistent across all language pairs (language-agnostic).
\end{tablenotes}
\end{table}

\begin{table}[htbp]
\centering
\caption{Statistical Robustness: Random Baseline Validation}
\label{tab:statistical_robustness}
\begin{tabular}{lccc}
\toprule
\textbf{Phenomenon} & \textbf{Random Mean} & \textbf{Standard Error} & \textbf{95\% Confidence Interval} \\
\midrule
Conditionality & 0.0567 & 0.000276 & [0.0562, 0.0572] \\
Negation & 0.0412 & 0.000200 & [0.0408, 0.0416] \\
Politeness & 0.0315 & 0.000154 & [0.0312, 0.0318] \\
\bottomrule
\end{tabular}
\begin{tablenotes}
\small
\item Random baselines computed from 10,000 independent trials per phenomenon.
\item Ultra-precise standard errors (4--6 decimal places) ensure statistical robustness.
\item Confidence intervals demonstrate consistent, language-agnostic random performance.
\item All RISE advantages are statistically significant (p $<$ 0.001).
\end{tablenotes}
\end{table}

\begin{table}[htbp]
\centering
\caption{Phenomenon-Specific RISE Performance Analysis}
\label{tab:phenomenon_analysis}
\begin{adjustbox}{width=\textwidth,center}
\begin{tabular}{lccc}
\toprule
\textbf{Phenomenon} & \textbf{Complexity} & \textbf{Avg Performance} & \textbf{Consistency} \\
\midrule
Politeness & High & 0.312 & High ($\sigma$ = 0.134) \\
Conditionality & Medium & 0.566 & Very High ($\sigma$ = 0.081) \\
Negation & Low & 0.408 & High ($\sigma$ = 0.207) \\
\bottomrule
\end{tabular}
\end{adjustbox}
\begin{tablenotes}
\small
\item Complexity based on linguistic theory and cross-language variation.
\item Avg Performance computed across all models and language pairs.
\item Consistency measured by standard deviation across models (lower = more consistent).
\item Conditionality shows highest consistency, suggesting universal semantic patterns.
\end{tablenotes}
\end{table}


\end{document}